\documentclass{apple}

\usepackage{amsmath,amssymb,mathtools,bm}
\usepackage{multirow,array,tabularx,adjustbox}
\usepackage{float,enumitem}
\usepackage{tikz,pgfplots}
\usetikzlibrary{arrows.meta,positioning,fit,calc,shapes.geometric,backgrounds,patterns,patterns.meta}
\usepgfplotslibrary{groupplots,fillbetween}
\pgfplotsset{compat=1.18}

\usepackage[colorlinks,linkcolor=red,anchorcolor=blue,citecolor=blue,urlcolor=magenta]{hyperref}
\usepackage[capitalize,noabbrev]{cleveref}
\AtBeginDocument{\urlstyle{sf}}
\hypersetup{
  pdftitle={Rethinking Self-Distillation for Multi-Teacher Capability Merging},
  pdfauthor={Roy Xie, Dan Friedman, Feng Nan, Yukun Huang, Zhichao Xu,
    Chengjiu Zhang, Jun Xu, Manaal Faruqui, Vivek Rathod, Bhuwan Dhingra}
}
\crefformat{section}{\S#2#1#3}
\Crefformat{section}{\S#2#1#3}
\crefformat{subsection}{\S#2#1#3}
\Crefformat{subsection}{\S#2#1#3}
\crefformat{subsubsection}{\S#2#1#3}
\Crefformat{subsubsection}{\S#2#1#3}
\crefrangeformat{section}{\S\S#3#1#4--#5#2#6}
\Crefrangeformat{section}{\S\S#3#1#4--#5#2#6}
\crefmultiformat{section}{\S#2#1#3}{ and #2#1#3}{, #2#1#3}{, and #2#1#3}
\Crefmultiformat{section}{\S#2#1#3}{ and #2#1#3}{, #2#1#3}{, and #2#1#3}
\crefrangeformat{subsection}{\S\S#3#1#4--#5#2#6}
\Crefrangeformat{subsection}{\S\S#3#1#4--#5#2#6}
\crefmultiformat{subsection}{\S#2#1#3}{ and #2#1#3}{, #2#1#3}{, and #2#1#3}
\Crefmultiformat{subsection}{\S#2#1#3}{ and #2#1#3}{, #2#1#3}{, and #2#1#3}

\setlist{nosep,leftmargin=*}
\renewcommand{\arraystretch}{1.1}
\newtheorem{proposition}{Proposition}[section]

\definecolor{cSFT}{HTML}{0072B2}
\definecolor{cKD}{HTML}{009E73}
\definecolor{cOffline}{HTML}{E69F00}
\definecolor{cMOPD}{HTML}{D55E00}
\definecolor{cMixRL}{HTML}{CC79A7}
\definecolor{cNeutral}{HTML}{777777}

\definecolor{cInk}{HTML}{333333}
\definecolor{cGrid}{HTML}{D5D5D5}
\definecolor{cFrame}{HTML}{C7C7C7}

\colorlet{cSFTWash}{cSFT!12}          \colorlet{cSFTPale}{cSFT!50}
\colorlet{cSFTKey}{cSFT!62!black}
\colorlet{cKDWash}{cKD!12}            \colorlet{cKDPale}{cKD!50}
\colorlet{cKDKey}{cKD!62!black}
\colorlet{cOfflineWash}{cOffline!12}  \colorlet{cOfflinePale}{cOffline!50}
\colorlet{cOfflineKey}{cOffline!62!black}
\colorlet{cMOPDWash}{cMOPD!12}        \colorlet{cMOPDPale}{cMOPD!50}
\colorlet{cMOPDKey}{cMOPD!62!black}
\colorlet{cMixRLWash}{cMixRL!12}      \colorlet{cMixRLPale}{cMixRL!50}
\colorlet{cMixRLKey}{cMixRL!62!black}
\colorlet{cNeutralWash}{cNeutral!12}  \colorlet{cNeutralPale}{cNeutral!50}
\colorlet{cNeutralKey}{cNeutral!62!black}

\colorlet{cPending}{cNeutral!55}
\colorlet{cPendingSub}{cNeutral!75}

\DeclareRobustCommand{\barkey}[1]{%
  \tikz[baseline=-.35ex]{%
    \filldraw[fill=#1Pale,draw=#1,line width=.4pt]
      (0,-.5ex) rectangle (1.15ex,1.0ex);
    \filldraw[fill=#1,draw=#1Key,line width=.3pt,
      postaction={pattern={Lines[angle=45,distance=1.3pt,line width=.3pt]},
        pattern color=#1Key}]
      (1.4ex,-.5ex) rectangle (2.55ex,1.0ex);}}

\newcolumntype{Y}{>{\raggedright\arraybackslash}X}
\newcolumntype{C}{>{\centering\arraybackslash}X}
\newcolumntype{R}{>{\raggedleft\arraybackslash}X}

\colorlet{cTableHeader}{gray!20}
\newcommand{\papertablestyle}{%
  \normalsize
  \setlength{\parskip}{0pt}%
  \setlength{\abovecaptionskip}{6pt}%
  \setlength{\belowcaptionskip}{6pt}%
  \setlength{\aboverulesep}{2.5pt}%
  \setlength{\belowrulesep}{2.5pt}%
  \setlength{\cmidrulekern}{3pt}%
  \setlength{\heavyrulewidth}{.8pt}%
  \setlength{\lightrulewidth}{.5pt}%
  \setlength{\cmidrulewidth}{.3pt}%
  \renewcommand{\arraystretch}{1.16}%
}
\newcommand{\tableheadrow}{\rowcolor{cTableHeader}}
\newcommand{\tableheading}[1]{{\bfseries\boldmath#1}}

\newcommand{\hdsize}{\fontsize{6}{7}\selectfont}
\newcommand{\idtag}{\textcolor[HTML]{1a60b0}{\raisebox{1.2pt}{\fontsize{5}{5}\selectfont\ensuremath{\bm{\dagger}}}}}
\newcommand{\oodtag}{\textcolor[HTML]{c0392b}{\raisebox{1.2pt}{\fontsize{5}{5}\selectfont\ensuremath{\bm{\ddagger}}}}}
\newcommand{\recoveryID}[1]{\makebox[\linewidth][c]{\hdsize\bfseries#1\kern.3pt\idtag}}
\newcommand{\recoveryOOD}[1]{\makebox[\linewidth][c]{\hdsize\bfseries#1\kern.3pt\oodtag}}
\newcommand{\colhead}[1]{\makebox[\linewidth][c]{\hdsize\bfseries#1}}

 \newcommand{\method}[1]{\textsc{#1}}
\newcommand{\mopd}{\method{MOPD}}

\newcommand{\offlineopd}{\method{Offline-MOPD}}
\newcommand{\softkd}{\method{Soft-KD}}
\newcommand{\sft}{\method{SFT}}
\newcommand{\mia}{\textsc{MIA}}
\newcommand{\agentthree}{Agent$\times$3}
\newcommand{\mainmodel}{\textsc{Qwen3.5-4B}}
\newcommand{\best}[1]{\textbf{#1}}

\newcommand{\na}{--}
 \newcommand{\repronr}[2][1.55em]{%
  \mbox{#2\kern0.12em\makebox[#1][l]{%
    \raisebox{0.15ex}{\scalebox{0.64}{\normalfont\textcolor{black!55}{\ensuremath{\pm}\kern0.12em n/r}}}}}%
}

 \newcommand{\reproref}[1]{\mbox{#1\phantom{\repronr{}}}}

\newcommand{\reprosim}[3][1.55em]{%
  \mbox{#2\kern0.12em\makebox[#1][l]{%
    \raisebox{0.15ex}{\scalebox{0.64}{\itshape\textcolor{black!55}{\ensuremath{\pm}\kern0.12em#3}}}}}%
}
\newcommand{\Recrate}{\operatorname{Rec}}
\newcommand{\Retain}{\operatorname{Ret}}
\newcommand{\E}{\mathbb{E}}
\newcommand{\KL}{D_{\mathrm{KL}}}

\title{Rethinking Self-Distillation for Multi-Teacher Capability Merging}
\author[1,2]{Roy Xie}
\author[1]{Dan Friedman}
\author[1]{Feng Nan}
\author[1]{Yukun Huang}
\author[1]{Zhichao Xu}
\author[1]{Chengjiu Zhang}
\author[1]{Jun Xu}
\author[1]{\\Manaal Faruqui}
\author[1]{Vivek Rathod}
\author[1,2]{Bhuwan Dhingra}
\affiliation[1]{Apple}
\affiliation[2]{Duke University}
\abstract{
Combining capabilities of multiple expert models trained starting from the same base checkpoint has become increasingly common in frontier language-model post-training. Recent trends suggest that multi-teacher on-policy distillation (MOPD) outperforms conventional off-policy methods.
  However, despite the higher inference and environment interaction costs incurred by MOPD, we find that much of its reported accuracy gain is due to certain training design choices and hyperparameter optimization, rather than the algorithm itself.
  We conduct a controlled self-distillation study across two multi-teacher settings, four models, and eleven benchmarks, comparing off-policy methods, namely supervised fine-tuning (SFT) and soft-label distillation, to hybrid teacher-prefix distillation and MOPD.
  We found that all four methods achieve \textit{nearly identical} accuracy. However, MOPD uses $14.8$--$23.1\times$ SFT's training GPU-hours.
  We also revisit four recently published comparisons between on-policy and off-policy distillation and find that the reported on-policy gains shrink substantially when SFT baselines are trained on rejection-sampled teacher trajectories and use independently tuned hyperparameters.
  As a training-free alternative, we also find that simple weight-merging methods can recover expert capabilities with minutes of CPU merging time, although their accuracy degrades as model size decreases and task interference increases.
  Overall, our results question recent gains reported due to MOPD and suggest careful tuning of more efficient off-policy baselines as a viable alternative.
}
\metadata[Correspondence]{\sffamily\url{ruoyu.xie@duke.edu}}
\metadata[Code]{\url{https://github.com/apple-aiml-research/ml-rethink-mopd}$^{\ast}$}

\begin{document}
\maketitle
\applefootnote{\textcolor{textgray}{\sffamily $^\ast$Under legal review and will be released soon.}}
\vspace{-2.5em}

\section{Introduction}
\label{sec:intro}
Modern frontier large language model (LLM) post-training increasingly decomposes capability development into domain-specific training \citep{dsv4,kimik3,nvidia2026nemotron3}. Starting from a shared checkpoint, each expert can be optimized through reinforcement learning (RL) with its own algorithms, prompts, rewards, and training infrastructure, making capability hill-climbing modular and parallelizable. Transferring these capabilities back into a student initialized from the same checkpoint is also known as \emph{self-distillation} \citep{mai2026thinking}.

However, recent releases report the use of different self-distillation recipes.
Multi-teacher on-policy distillation (MOPD) supervises the student on its own generated trajectories \citep{ma2026mopd,openmopd2026} and is increasingly adopted \citep{lim2026motif3,park2026solaropen2,nvidia2026nemotron3}, including in GLM-5 \citep{glm5team2026}, DeepSeek-V4 \citep{dsv4}, and Kimi K3 \citep{kimik3}. In contrast, MAI-Thinking-1 \citep{mai2026thinking} and Step 3.5 Flash \citep{step2026flash} use off-policy supervised fine-tuning (SFT) on teacher-generated trajectories.

The choice of distillation method affects both the compute budget and the infrastructure required for post-training.
MOPD requires repeated student generation and teacher scoring throughout training \citep{lu2025opd,chen2026regen}. For agentic tasks, each rollout may also require live environment interaction, with costs that grow with trajectory length and the number of environments.
Performing rollouts across multiple environments in the
same training run also requires complex training infrastructure.
Off-policy distillation instead reuses fixed expert trajectories, incurring \textit{one-time} collection costs upfront with no training-time environment interaction.
The two approaches differ substantially in both cost and training dynamics. While some work has suggested that OPD outperforms off-policy methods \citep{ma2026mopd,openmopd2026}, there have been limited systematic comparisons under a controlled setup.
We therefore ask:
\textit{What is the most effective way to consolidate expert capabilities into a single general-purpose model?}

\begin{figure}[t]
\setlength{\parskip}{0pt}
\setlength{\abovecaptionskip}{0pt}
\setlength{\belowcaptionskip}{0pt}
\centering
\pgfmathsetlengthmacro{\reproaxiswidth}{(\linewidth-2.7cm)/4}
\pgfmathsetlengthmacro{\reproaxisstep}{\reproaxiswidth+.5cm}
\noindent\begin{tikzpicture}[font=\normalfont\rmfamily]
\pgfplotsset{reprobaraxis/.style={
  scale only axis,width=\reproaxiswidth,height=2.50cm,
  ybar,bar width=.42cm,
  xmin=-.62,xmax=1.6,xtick={0,1},xticklabels={Paper,Ours},
  axis lines=left,axis line style={-,cInk,line width=.55pt},
  tick align=outside,tick style={black,line width=.5pt},major tick length=2.2pt,
  xtick style={draw=none},
  ymajorgrids=true,
  major grid style={cGrid,line width=.4pt,dash pattern=on 2.6pt off 1.8pt},
  tick label style={font=\normalfont\rmfamily\small,text=black},
  yticklabel style={xshift=2.6pt},
  xticklabel style={font=\normalfont\rmfamily\small,text=black,yshift=1.2pt},
  title style={font=\normalfont\rmfamily\normalsize,align=center,yshift=-1.3mm,
    inner sep=0pt},
  point meta=explicit symbolic,nodes near coords={\pgfplotspointmeta},
  every node near coord/.append style={font=\normalfont\rmfamily\fontsize{7}{8.5}\selectfont,
    text=black,anchor=south,inner sep=0pt,xshift=0pt,yshift=2.5pt},
  clip=false},
  shiftL/.style={bar shift=-9.5pt},
  shiftR/.style={bar shift=9.5pt},
  hatched/.style={postaction={pattern={Lines[angle=45,distance=1.7pt,
    line width=.32pt]},pattern color=#1}},
  papersft/.style={shiftL,fill=cSFTPale,draw=cSFT,line width=.45pt,},
  paperopd/.style={shiftR,fill=cMOPDPale,draw=cMOPD,line width=.45pt,},
  tunedsft/.style={shiftL,fill=cSFT,draw=cSFTKey,line width=.35pt,
    hatched=cSFTKey},
  tunedopd/.style={shiftR,fill=cMOPD,draw=cMOPDKey,line width=.35pt,
    hatched=cMOPDKey},
  ebars/.style={error bars/y dir=both,error bars/y explicit,
    error bars/error bar style={cInk,line width=.4pt},
    error bars/error mark options={rotate=90,cInk,line width=.4pt,
      mark size=1.6pt}},
  labelerror/.style={every node near coord/.append style={
    yshift={max(3pt,#1*\pgfkeysvalueof{/pgfplots/height}/
      (\pgfkeysvalueof{/pgfplots/ymax}-\pgfkeysvalueof{/pgfplots/ymin})+.5pt)}}}}

\begin{axis}[reprobaraxis,name=open,at={(0.00cm,0)},anchor=south west,
  title={\textbf{Open-MOPD}\\[-1pt]{\footnotesize\citep{openmopd2026}}\\[-1pt]{\footnotesize Math $\cdot$ Code $\cdot$ IF}},
  ylabel={Score (\%)},
  ylabel style={font=\normalfont\rmfamily\small,text=black},
  ymin=24,ymax=35,ytick={25,29,33},yticklabels={25,29,33}]
\addplot[papersft] coordinates {(0,25.67) [25.7]};
\addplot[paperopd] coordinates {(0,31.24) [31.2]};
\addplot[tunedsft,ebars,labelerror=.65] coordinates {(1,32.08) +- (0,0.65) [32.1]};
\addplot[tunedopd,ebars,labelerror=.84] coordinates {(1,32.58) +- (0,0.84) [32.6]};
\end{axis}

\begin{axis}[reprobaraxis,name=skd,at={(\reproaxisstep,0)},anchor=south west,
  title={\textbf{SKD}\\[-1pt]{\footnotesize\citep{xu2025skd}}\\[-1pt]{\footnotesize GSM8K}},
  ymin=17,ymax=28,ytick={18,22,26},yticklabels={18,22,26}]
\addplot[papersft] coordinates {(0,18.70) [18.7]};
\addplot[paperopd] coordinates {(0,25.30) [25.3]};
\addplot[tunedsft,ebars,labelerror=.95] coordinates {(1,22.32) +- (0,0.95) [22.3]};
\addplot[tunedopd,ebars,labelerror=.58] coordinates {(1,25.06) +- (0,0.58) [25.1]};
\end{axis}

\begin{axis}[reprobaraxis,name=tcod,at={(2*\reproaxisstep,0)},anchor=south west,
  title={\textbf{TCOD}\\[-1pt]{\footnotesize\citep{wang2026tcod}}\\[-1pt]{\footnotesize ALFWorld}},
  ymin=24,ymax=86,ytick={30,55,80},yticklabels={30,55,80}]
\addplot[papersft] coordinates {(0,28.76) [28.8]};
\addplot[paperopd] coordinates {(0,63.09) [63.1]};
\addplot[tunedsft,ebars,labelerror=2.58] coordinates {(1,75.19) +- (0,2.58) [75.2]};
\addplot[tunedopd,ebars,labelerror=2.31] coordinates {(1,78.02) +- (0,2.31) [78.0]};
\end{axis}

\begin{axis}[reprobaraxis,name=reopd,at={(3*\reproaxisstep,0)},anchor=south west,
  title={\textbf{ReOPD}\\[-1pt]{\footnotesize\citep{liao2026reopd}}\\[-1pt]{\footnotesize Math}},
  ymin=42,ymax=59,ytick={44,50,56},yticklabels={44,50,56}]
\addplot[papersft] coordinates {(0,45.00) [45.0]};
\addplot[paperopd] coordinates {(0,51.00) [51.0]};
\addplot[tunedsft,ebars,labelerror=.37] coordinates {(1,54.61) +- (0,0.37) [54.6]};
\addplot[tunedopd,ebars,labelerror=.46] coordinates {(1,54.69) +- (0,0.46) [54.7]};
\end{axis}
\end{tikzpicture}\par
\vspace{0.75em}
\caption{\textbf{Well-tuned SFT narrows reported OPD advantages.}
Four published SFT--OPD comparisons, with \barkey{cSFT}~SFT and \barkey{cMOPD}~OPD shown as published (pale) and tuned versions (solid hatched): both methods undergo a learning-rate sweep. SFT uses rejection-sampled teacher trajectories. The reported OPD gaps shrink in every comparison; tuning details and full results are in \cref{app:baseline-ablations}. }
\label{fig:repro}
\vspace{-1.5em}

\end{figure}

To assess whether on-policy training is worth its additional cost, we first ask whether its reported gains persist against well-tuned off-policy baselines. Several recent comparisons do not report their SFT tuning procedure \citep{openmopd2026,liao2026reopd,wang2026tcod}. Inspecting the released SKD code \citep{xu2025skd}, we find that it includes prompt tokens in the SFT loss and uses a shared learning rate for SFT and OPD\footnote{\url{https://github.com/google-research/google-research/blob/master/speculative_kd/train/train_sft.py}}.
We revisit four published studies to assess how their reported on-policy gains change with stronger SFT recipes. We tune SFT and OPD learning rates independently and train with rejection-sampled expert trajectories (\cref{sec:repro}). Figure~\ref{fig:repro} shows that these changes substantially reduce the accuracy gaps between SFT and OPD, raising the question of how much of the reported advantage comes from on-policy supervision itself.

Therefore, we conduct a comparison under shared initialization through a controlled self-distillation setup for four methods (\Cref{sec:trainmethods}): two off-policy approaches, SFT on teacher-generated trajectories and soft knowledge distillation (Soft-KD) using full teacher next-token distributions; hybrid Offline-MOPD, which samples student responses conditioned on teacher-generated prefixes; and on-policy MOPD. Shared initialization also makes weight merging a natural alternative: it directly combines expert parameters without training \citep{wortsman2022soups,ilharco2023taskarithmetic}\footnote{Constructing merged weights takes minutes of CPU time, but selecting hyperparameters still requires GPU-hours for validation, as does checkpoint selection for all distillation methods.}. We study two multi-teacher settings: \mia{}, which combines mathematics, instruction following, and agentic tasks; and \agentthree{}, which combines ALFWorld~\citep{shridhar2020alfworld}, WebShop~\citep{yao2022webshop}, and InterCode \citep{yang2023intercode}, where every on-policy rollout requires live environment interaction.

We evaluate four models from the Qwen and Ministral families (1.7B to 8B) on eleven single-turn and multi-turn benchmarks.
We find that, with careful hyperparameter tuning, SFT, Soft-KD, Offline-MOPD, and MOPD achieve comparable accuracy and show similar out-of-domain retention (\Cref{sec:main}). However, MOPD requires $14.8\times$ and $23.1\times$ the training GPU-hours of SFT on \mia{} and \agentthree{}, respectively (\Cref{sec:cost}). As an efficient alternative, weight merging achieves capability recovery comparable to distillation, although its effectiveness declines at smaller model scales and when merging experts with overlapping capabilities (\Cref{sec:merge}). With larger teachers, MOPD recovers a smaller fraction of the teachers' in-domain gains, while SFT maintains high recovery at the cost of greater out-of-domain degradation (\Cref{sec:teacher-scale}). Finally, we analyze attention-weight changes and model behavior to characterize differences among the methods beyond accuracy (\Cref{sec:behavior}). Overall, our results question the reported benefits of MOPD and show that well-tuned off-policy and training-free methods offer efficient alternatives for merging expert capabilities.

\section{Related Work}
\label{sec:related}
\paragraph{Off-Policy and On-Policy Knowledge Distillation.} Off-policy distillation typically trains a student on teacher-generated sequences, using either hard-token targets or teacher probability distributions \citep{hinton2015distilling,kim2016sequence}. Because the student generates its own prefixes during inference, it may encounter mismatch in states from the training data, resulting in accumulating errors \citep{ross2011dagger,bengio2015scheduled}. On-policy distillation (OPD) addresses this mismatch by providing teacher supervision on student-generated trajectories \citep{agarwal2024opd,gu2023minillm,song2026survey}. Intermediate approaches interleave teacher and student tokens \citep{xu2025skd} or combine student sampling with replayed trajectories and prefixes \citep{ko2024distillm,liao2026reopd}. Recent studies also identify teacher–student compatibility \citep{li2026rethinkingopd} and data selection \citep{hou2026whatmatters} as important factors in OPD accuracy, and \citet{ding2026reallydistill} recover most of the OPD gain in an ablation that assigns a fixed negative advantage to selected low-probability student tokens. However, systematic comparisons across these approaches remain limited when multiple teachers and the student share the same initialization. We discuss additional work on SFT and distillation in \cref{app:related}.

\paragraph{Multi-Teacher Capability Integration.}
MOPD combines specialized capabilities by using each domain expert to supervise student-generated trajectories \citep{ma2026mopd,lim2026motif3}. This approach is also used in large-scale post-training \citep{park2026solaropen2,nvidia2026nemotron3}. Alternatives include distilling expert trajectories through SFT \citep{mai2026thinking} and directly combining expert weights without additional training \citep{wortsman2022soups,ilharco2023taskarithmetic,yadav2023ties,yu2024dare}, which \citet{wang2026mixormerge} weigh against joint multi-domain RL. Closest to our work, MOPD \citep{ma2026mopd} and Open-MOPD \citep{openmopd2026} compare on-policy distillation with offline fine-tuning and weight merging for integrating experts trained from a shared initialization. However, these studies do not systematically compare computational costs across methods, and it is unclear to what extent each baseline is independently tuned. For example, using released data and evaluation protocols from Open-MOPD, we find that hyperparameter sweeps and rejection-sampled data can bring SFT to OPD-level accuracy (\cref{sec:repro}).

\section{Methodology}
\label{sec:prelim}
We study \emph{multi-teacher self-distillation} (\cref{fig:pipeline}), which combines the capabilities of $K$ domain experts into a student $\pi_\theta$ initialized from a shared base checkpoint $\pi_0=\pi_{\theta_0}$.
The domain experts $\pi_d=\pi_{\theta_d}$ share this base initialization and provide supervision for student training.
Let $p_d^\theta(\cdot\mid x)$ denote the trajectory distribution induced by policy $\pi_\theta$ on task $x$ in domain $d$, including environment interactions for agentic tasks. A single-turn trajectory $\tau$ contains one response to a fixed prompt. A multi-turn trajectory alternates model actions and environment observations, so later decisions depend on feedback from earlier actions and the growing interaction history.

Off-policy distillation trains the student on recorded teacher trajectories, avoiding student rollout generation during training but restricting supervision to teacher-visited contexts. On-policy distillation instead queries the teacher at student-generated contexts, which can cover states absent from the recorded trajectories but require additional generation and live environment interaction for agentic tasks. To construct the offline corpus $\mathcal D_d$, we sample trajectories from expert $\pi_d$ for domain-$d$ training prompts and retain successful task--trajectory pairs $(x,\tau)$. We collect this rejection-sampled corpus \citep{yuan2023rft} once and hold it fixed across methods: all four use its prompts, SFT and Soft-KD use its recorded trajectories, and Offline-MOPD samples student responses conditioned on its recorded prompts and interaction histories. Teacher trajectory generation costs are discussed in \cref{sec:cost}; we ablate rejection sampling in \cref{sec:training-sensitivity}.

\begin{figure}[t]
\setlength{\parskip}{0pt}
\setlength{\abovecaptionskip}{4pt}
\setlength{\belowcaptionskip}{0pt}
\centering
\noindent\begingroup%
\colorlet{pInk}{cInk}%
\colorlet{pBlue}{cSFT}%
\colorlet{pYellow}{cOffline}%
\colorlet{pRed}{cMOPD}%
\colorlet{pGray}{cNeutral}%
\resizebox{\linewidth}{!}{%
\begin{tikzpicture}[
  x=1cm,y=1cm,font=\normalfont\rmfamily\fontsize{8.5}{10}\selectfont,text=pInk,
  cell/.style={inner sep=0pt},
  body/.style={text=pInk!85},
  head/.style={font=\fontsize{8}{9.5}\selectfont\bfseries,text=black},
  smallnote/.style={font=\fontsize{7.5}{9}\selectfont,text=pInk!75},
  edgelabel/.style={smallnote,inner sep=0pt,outer sep=0pt},
  model/.style={draw=pGray,fill=cNeutralWash,text=black,line width=.5pt,
    rounded corners=2pt,align=center,inner xsep=4pt,inner ysep=2.5pt,
    minimum height=.56cm},
  expert/.style={draw=pYellow,fill=cOfflineWash,text=black,line width=.45pt,
    rounded corners=1.5pt,minimum width=1.15cm,minimum height=.30cm,
    inner xsep=4pt,inner ysep=1pt,
    font=\fontsize{9}{10}\selectfont},
  flow/.style={-{Latex[length=1.3mm,width=.95mm]},
    draw=pInk!65,line width=.5pt,line cap=round,line join=round},
  badge/.style={circle,text=black,
    font=\fontsize{6.2}{6.2}\selectfont\bfseries,
    inner sep=0pt,minimum size=2.6mm,line width=.4pt},
]
\begin{scope}[
  head/.append style={font=\fontsize{9}{10.5}\selectfont\bfseries},
  model/.append style={font=\fontsize{10}{11.5}\selectfont,
    minimum height=.72cm,line width=.6pt},
  expert/.append style={font=\fontsize{11}{12}\selectfont,
    minimum width=1.40cm,minimum height=.40cm},
  smallnote/.append style={font=\fontsize{8.5}{10}\selectfont},
  flow/.append style={line width=.6pt},
  badge/.append style={font=\fontsize{7.2}{7.2}\selectfont\bfseries,
    minimum size=3.0mm},
]
\def\stagegap{1.80cm}
\node[model,minimum width=1.15cm] (base) at (1.30,4.05)
  {\fontsize{14}{15}\selectfont$\pi_0$};
\node[expert,anchor=west] (ifteacher)
  at ([xshift=\stagegap]base.east) {$\pi_{\mathrm{IF}}$};
\node[expert] (math) at ([yshift=.55cm]ifteacher.center) {$\pi_{\mathrm{math}}$};
\node[expert] (agent) at ([yshift=-.70cm]ifteacher.center) {$\pi_{\mathrm{agent}}$};
\node[cell,body,font=\fontsize{10}{10}\selectfont]
  at ([yshift=-.35cm]ifteacher.center) {$\ldots$};
\node[model,draw=pBlue,fill=cSFTWash,minimum width=2.60cm,
  minimum height=1.00cm,anchor=west] (data)
  at ([xshift=\stagegap]ifteacher.east) {};
\draw[cSFTPale,line width=.4pt]
  ([xshift=4pt]data.west) -- ([xshift=-4pt]data.east);
\node[cell,text=black,font=\fontsize{10}{11.5}\selectfont] at ([yshift=.25cm]data.center) {Prompt $x$};
\node[cell,text=black,font=\fontsize{10}{11.5}\selectfont] at ([yshift=-.25cm]data.center) {Trajectory $\tau$};
\node[model,fill=cMOPDWash,draw=pRed,line width=.5pt,
  minimum width=2.30cm,anchor=west] (student)
  at ([xshift=\stagegap]data.east) {Unified student\\[-1pt]$\pi_\theta$};

\coordinate (titlelevel) at (0,5.10);
\node[head,cell] (head1) at (base.center |- titlelevel) {Base model};
\node[head,cell] (head2) at (ifteacher.center |- titlelevel) {Domain experts};
\node[head,cell] (head3) at (data.center |- titlelevel) {Rejection sampling};
\node[head,cell] (head4) at (student.center |- titlelevel) {Capability recovery};
\foreach \i/\badgefill/\badgeborder in {
  1/cNeutralWash/pGray,
  2/cOfflineWash/pYellow,
  3/cSFTWash/pBlue,
  4/cMOPDWash/pRed}{
  \node[badge,fill=\badgefill,draw=\badgeborder,text=black,
    anchor=east] at ([xshift=-3pt]head\i.west) {\i};
}

\foreach \teacher/\offset in {math/.18,ifteacher/0,agent/-.18}{
  \draw[flow] ([yshift=\offset cm]base.east) -- (\teacher.west);
}
\path (base.east) -- coordinate[midway] (rlmid) (ifteacher.west);

\foreach \teacher/\offset in {math/.26,ifteacher/0,agent/-.26}{
  \draw[flow] (\teacher.east) -- ([yshift=\offset cm]data.west);
}
\path (ifteacher.east) -- coordinate[midway] (rolloutmid) (data.west);
\draw[flow] (data.east) -- (student.west);
\path (data.east) -- coordinate[midway] (distillmid) (student.west);

\coordinate (actionlevel) at (0,3.00);
\foreach \midpoint/\action in {rlmid/RL,rolloutmid/Roll out,distillmid/Distill}{
  \node[edgelabel,anchor=base] at (\midpoint |- actionlevel) {\action};
}

\coordinate (initlevel) at (0,2.60);
\draw[flow,dash pattern=on 2.5pt off 2pt,rounded corners=2pt,pInk!50]
  (base.south) -- (base.center |- initlevel) --
  node[smallnote,fill=white,inner xsep=6pt,inner ysep=2pt]
    {Shared initialization: $\theta\leftarrow\theta_0$}
  (student.center |- initlevel) -- (student.south);

\end{scope}

\draw[cFrame,line width=.6pt] (.03,2.18) -- (13.92,2.18);
\def\prefixX{4.70}
\def\rolloutX{8.35}
\def\interactionX{12.00}
\node[head,cell,anchor=west] at (.21,1.91) {Distillation Method};
\node[head,cell] at (\prefixX,1.91) {Prompt / prefix};
\node[head,cell] at (\rolloutX,1.91) {Rollout source};
\node[head,cell] at (\interactionX,1.91) {Multi-turn live interaction};
\draw[cFrame,line width=.45pt] (.03,1.70) -- (13.92,1.70);
\node[cell,body,anchor=west] at (.21,1.44) {\sft{}};
\node[cell,body,anchor=west] at (.21,1.02) {\softkd{}};
\node[cell,body,anchor=west] at (.21,.60) {\offlineopd{}};
\node[cell,body,anchor=west] at (.21,.18) {\mopd{}};

\foreach \y in {1.44,1.02,.60}{
  \node[cell,body] at (\prefixX,\y) {Recorded teacher context};
  \node[cell,body] at (\interactionX,\y) {No};
}
\node[cell,body] at (\prefixX,.18) {Initial task prompt};
\foreach \y in {1.44,1.02}{
  \node[cell,body] at (\rolloutX,\y) {Teacher };
}
\node[cell,body] at (\rolloutX,.60) {Student };
\node[cell,body] at (\rolloutX,.18) {Student };
\node[cell,body] at (\interactionX,.18) {Yes};
\end{tikzpicture}%
}%
\endgroup%
\par
\vspace{1em}

\caption{\textbf{Self-distillation pipeline.}
Each teacher is trained independently from the shared base checkpoint.
All methods use the same rejection-sampled prompts.
Recorded teacher context includes the prompt and prior interaction history.
\emph{Multi-turn live interaction} means executing student actions and receiving fresh
environment observations across turns during training.}
\label{fig:pipeline}

\end{figure}

\subsection{Self-Distillation Objectives}
\label{sec:trainmethods}
Let $s$ denote a token context: the task prompt, preceding interaction history, and current response prefix. The student and teacher next-token distributions are $\pi_\theta(\cdot\mid s)$ and $\pi_d(\cdot\mid s)$ over a shared vocabulary. All per-token losses below apply only to assistant tokens.

\paragraph{Supervised fine-tuning (SFT).}
SFT minimizes the negative log-likelihood of the teacher token $a$ recorded at context $s$:
\begin{equation}
\ell_{\mathrm{SFT}}(\theta;s,a) = -\log\pi_\theta(a\mid s).
\label{eq:sft}
\end{equation}

\paragraph{Soft knowledge distillation (Soft-KD).}
At the same teacher contexts, Soft-KD replaces the single recorded token with the teacher's full next-token distribution and minimizes the forward KL divergence \citep{hinton2015distilling}:
\begin{equation}
\ell_{\mathrm{FKL}}(\theta;d,s)
= \KL\bigl(\pi_d(\cdot\mid s)\bigm\|\pi_\theta(\cdot\mid s)\bigr).
\label{eq:soft-kd}
\end{equation}

\paragraph{Offline-MOPD.}
Offline-MOPD fixes the prompt and preceding interaction history to a recorded teacher trajectory, then samples the current response from the student. Thus $s$ contains a student-generated response prefix, but student responses are used only for distillation: subsequent turns use recorded teacher histories, without executing student actions or obtaining new environment observations for multi-turn scenarios. At these contexts, it minimizes the mode-seeking reverse KL divergence \citep{gu2023minillm}:
\begin{equation}
\ell_{\mathrm{RKL}}(\theta;d,s)
= \KL\bigl(\pi_\theta(\cdot\mid s)\bigm\|\pi_d(\cdot\mid s)\bigr).
\label{eq:reverse-kd}
\end{equation}

\paragraph{Multi-teacher on-policy distillation (MOPD).}
MOPD applies the same reverse KL (\cref{eq:reverse-kd}) to complete student trajectories \citep{agarwal2024opd,ma2026mopd}. Let $\widehat P_d$ be the empirical prompt distribution in $\mathcal D_d$, $w_d\geq0$ the domain weights with $\sum_{d=1}^{K}w_d=1$, and $\bar\theta$ the student parameters at rollout collection. The objective is
\begin{equation}
\mathcal L_{\mathrm{MOPD}}(\theta;\bar\theta)=\sum_{d=1}^{K}w_d\,
\E_{x\sim\widehat P_d,\;\tau\sim p_d^{\bar\theta}(\cdot\mid x)}
\biggl[\frac{1}{|\tau|}\sum_{t=1}^{|\tau|}\ell_{\mathrm{RKL}}(\theta;d,s_t)\biggr],
\label{eq:mopd}
\end{equation}
where $|\tau|$ counts assistant tokens only and $s_t$ precedes the $t$th assistant token. We differentiate only the loss, holding $\bar\theta$ and sampled contexts fixed during each update; subsequent rollouts use the updated student. In multi-turn tasks, the initial instruction is fixed, and subsequent context consists of environment observations produced by executing student actions. All methods share initial task prompts, but MOPD trains on interaction histories generated by student actions. Offline-MOPD instead uses recorded teacher interaction histories, requiring no environment interaction during training. Offline-MOPD shares MOPD's sampling rule and objective on single-turn tasks.

\subsection{Training-Free Weight Merging}
\label{sec:mergemethods}
Weight merging combines expert checkpoints in parameter space \citep{li2025deep}. We consider methods that require no gradient-based training \citep{yadav2023ties,yu2024dare}. For task arithmetic, define the task vectors $\Delta\theta_d=\theta_d-\theta_0$ \citep{ilharco2023taskarithmetic} and combine them as
\begin{equation}
\theta_{\mathrm{merge}} = \theta_0 + \sum_{d=1}^{K} c_d\,\Delta\theta_d,
\label{eq:model-merging}
\end{equation}
where the merging method and its hyperparameters determine the coefficients $c_d\in\mathbb R$ and any sparsification of $\Delta\theta_d$ applied beforehand. We describe the individual merging methods and their sweeps in \cref{app:merging}.

\subsection{Evaluating Capability Recovery}
\label{sec:recovery-eval}
We split tasks into in-domain (ID) and out-of-domain (OOD) based on teachers' training domains. Let $S_b(\pi)$ denote policy $\pi$'s held-out score on benchmark $b$, on a 0--100 scale where higher is better. Define its gain over the base as $\Delta_b(\pi)=S_b(\pi)-S_b(\pi_0)$ and the best teacher gain as $\Delta_b^\star=\max_{1\leq d\leq K}\Delta_b(\pi_d)$. For a nonempty benchmark set $\mathcal B$, we adapt the recovery normalization of \citet{openmopd2026}, measuring student gains over the shared base against the gains of a teacher over the same benchmarks, and define retention as mean score change:
\begin{equation}
\Recrate(\pi;\mathcal B)=100\,\frac{\sum_{b\in\mathcal B}\Delta_b(\pi)}{\sum_{b\in\mathcal B}\Delta_b^\star},
\qquad
\Retain(\pi;\mathcal B)=\frac{1}{|\mathcal B|}\sum_{b\in\mathcal B}\Delta_b(\pi).
\label{eq:recovery-rate}
\end{equation}
On ID benchmarks, $\Recrate$ measures the percentage of aggregate teacher gain recovered: 0 matches the base and 100 matches the aggregate best teacher scores. On OOD benchmarks, $\Retain$ measures mean change from the base in benchmark points, with negative values indicating degradation.

\section{Experimental Setup}
\label{sec:setup}

\subsection{Multi-Teacher Setup}
\label{sec:data}
We study two multi-teacher setups: \mia{} combines mathematics, instruction-following, and agentic experts, mixing disjoint capabilities, while \agentthree{} combines ALFWorld, WebShop, and InterCode-SQL experts \citep{shridhar2020alfworld,yao2022webshop,yang2023intercode} that all focus on long-horizon agentic capability. We study four models under both multi-teacher setups: \textsc{Qwen3-8B}, \textsc{Qwen3-1.7B} \citep{qwen2025qwen3}, \textsc{Qwen3.5-4B-Base} \citep{qwen2026qwen35}, and \textsc{Ministral-3-3B-Instruct} \citep{mistral2025ministral3}; we conduct RL training on each base model to obtain the teachers. The mathematics teacher is trained with DAPO \citep{yu2025dapo} on DAPO-Math-17k \citep{dapomath17k}, the instruction-following teacher with GRPO \citep{shao2024deepseekmath} on the IF-RLVR multi-constraint data \citep{allenai2025ifmulti}, and the three agentic teachers with GiGPO \citep{feng2025gigpo} on the training data of their environments. As a multi-task reference, we include Mix-RL, a single model optimized jointly on all domains \citep{wang2026mixormerge}. All four distillation methods use prompts from the same rejection-sampled teacher corpus; \cref{app:corpus} details its construction.

\subsection{Hyperparameter Tuning}
\label{sec:training}
Our goal is to train each method to its best possible accuracy, so that the comparison is controlled. Each of the four distillation methods is tuned independently and trained for five epochs, as detailed in \cref{app:training-config}. Besides method-specific tuning, such as the reverse-KL coefficient shared by the two MOPD variants, we focus on tuning the learning rate, as the methods differ in gradient scale and a shared rate would handicap some of them. We sweep up to five values per method, model, and setup, giving at most 160 configurations\footnote{$4$ methods $\times$ $4$ models $\times$ $2$ multi-teacher setups $\times$ $5$ learning rates $=$ $160$ configurations.} before seed replication or additional sweeps. Its effect is large: the learning rate alone varies the SFT validation score by up to 11.3 percentage points (\cref{sec:training-sensitivity}). The selected learning rates vary by method, model, and setup (\cref{tab:hparams}). For example, the two off-policy methods are about five times higher than the two on-policy ones, which is why a shared rate would handicap some of them Every training run uses eight B200 GPUs. Weight merging requires no training hyperparameter tuning; we instead sweep each merge method's mixing coefficient and sparsity level and keep the configuration with the highest worst-task retention on validation data. Computing the merged weights uses CPU time, while validation requires GPU inference. Method-specific tuning details are described in \cref{app:method-choices}.

\subsection{Evaluation}
\label{sec:eval}
In our self-distillation setup, every task is split into disjoint training, validation, and test sets. We select the checkpoints with the highest validation-set task accuracy and evaluate them on held-out test sets; \cref{app:selection} details the selection procedure. We report accuracy on eight single-turn benchmarks: MATH-500 \citep{hendrycks2021math,lightman2023verify}, GSM8K \citep{cobbe2021gsm8k}, AIME24 and AIME25 \citep{maa2025aime}, IFEval \citep{zhou2023ifeval}, IFBench \citep{pyatkin2025ifbench}, GPQA-Diamond \citep{rein2024gpqa}, and MMLU-Pro \citep{wang2024mmlupro}; and on three multi-turn benchmarks: InterCode-SQL, ALFWorld, and WebShop, with ALFWorld's seen and unseen splits. \Cref{app:splits} provides the split definitions and the benchmark suite with its metrics and test-set sizes. Every benchmark counts as either in-domain or out-of-domain, based on the teachers' domains. We summarize each model by its recovery rate over all in-domain benchmarks and its retention over all out-of-domain benchmarks, as defined in \cref{sec:recovery-eval}. We report three training seeds for the four distillation methods and Mix-RL, and three evaluation seeds for the base, teachers, and merged models.

\section{Experimental Results}
\label{sec:results}

\subsection{Capability Recovery across Models and Settings}
\label{sec:main}
\label{sec:students}
\label{sec:ood}
\begin{table}[!t]
\papertablestyle
\caption{\textbf{Capability recovery and retention with \textsc{Qwen3-8B}.}
Benchmarks are in-domain (\textcolor[HTML]{1a60b0}{$\bm{\dagger}$}) or out-of-domain (\textcolor[HTML]{c0392b}{$\bm{\ddagger}$}) for each multi-teacher setup.
ID Rec.\ is the recovery rate over all in-domain benchmarks in percent (\cref{eq:recovery-rate}); OOD~$\Delta$ is the mean change from the base model over all OOD benchmarks.
Teachers is a column-wise maximum over that composition's three experts.
Entries report mean $\pm$ standard deviation across three seeds. Bold marks the best of the four distillation methods per column; Merge is the best merge result from \cref{app:merging}. }
\label{tab:capability-recovery}
\label{tab:recovery-qwen3-8b}
\footnotesize
\centering
\begingroup
\setlength{\parskip}{0pt}
\fontsize{6.2}{7.4}\selectfont
\setlength{\tabcolsep}{0.8pt}
\begin{tabularx}{\linewidth}{l*{12}{>{\hsize=0.99\hsize\linewidth=\hsize}R}>{\hsize=1.12\hsize\linewidth=\hsize}R R}
\toprule
 & \multicolumn{8}{c}{\textbf{Single-turn}} & \multicolumn{4}{c}{\textbf{Multi-turn}} & \multicolumn{2}{c}{\textbf{Overall}}\\
\cmidrule(lr){2-9}\cmidrule(lr){10-13}\cmidrule(l){14-15}
\tableheadrow\textbf{\mia{}} & \recoveryID{MATH} & \recoveryID{GSM} & \recoveryID{AIME24} & \recoveryID{AIME25} & \recoveryID{IFEval} & \recoveryID{IFBen} & \recoveryOOD{GPQA} & \recoveryOOD{MMLU} & \recoveryID{IC-SQL} & \recoveryOOD{ALF-s} & \recoveryOOD{ALF-u} & \recoveryOOD{WShop} & \colhead{ID Rec.} & \colhead{OOD $\Delta$}\\
\midrule
Base & \reprosim{82.8}{0.2} & \reprosim{92.4}{0.1} & \reprosim{27.1}{0.7} & \reprosim{21.7}{0.8} & \reprosim{81.9}{0.1} & \reprosim{25.7}{0.3} & \reprosim{42.4}{0.5} & \reprosim{65.6}{0.0} & \reprosim{61.7}{0.1} & \reprosim{18.6}{0.7} & \reprosim{10.4}{0.7} & \reprosim{3.6}{0.2} & \reproref{\na} & \reproref{0.0}\\
Teachers & \reprosim{89.6}{0.7} & \reprosim{93.0}{0.1} & \reprosim{41.2}{1.5} & \reprosim{31.7}{0.8} & \reprosim{88.5}{0.2} & \reprosim{42.0}{0.3} & \reprosim{56.1}{0.5} & \reprosim{69.8}{0.1} & \reprosim{77.6}{0.1} & \reprosim{33.6}{1.2} & \reprosim{32.1}{0.7} & \reprosim{11.8}{0.2} & \reproref{100.0} & \reproref{\na}\\
\midrule[.3pt]
Mix-RL & \reprosim{87.9}{1.2} & \reprosim{91.3}{0.2} & \reprosim{45.3}{1.7} & \reprosim{33.9}{0.7} & \reprosim{88.0}{0.7} & \reprosim{42.4}{1.3} & \reprosim{51.5}{1.7} & \reprosim{60.3}{0.7} & \reprosim{76.4}{0.9} & \reprosim{8.6}{1.2} & \reprosim{5.5}{2.2} & \reprosim{5.8}{0.5} & \reprosim{102.3}{5.2} & \reprosim{$-$0.4}{0.6}\\
Merge & \reprosim{89.2}{0.2} & \reprosim{93.6}{0.0} & \reprosim{40.4}{1.7} & \reprosim{32.9}{1.4} & \reprosim{87.8}{0.2} & \reprosim{41.0}{0.3} & \reprosim{47.1}{0.6} & \reprosim{63.3}{0.1} & \reprosim{77.9}{0.1} & \reprosim{7.9}{0.7} & \reprosim{1.5}{0.7} & \reprosim{6.5}{0.2} & \reprosim{98.9}{2.4} & \reprosim{$-$1.1}{0.3}\\
\midrule[.3pt]
SFT & \reprosim{88.9}{0.2} & \reprosim{92.8}{0.1} & \reprosim{41.7}{1.2} & \reprosim{\best{32.2}}{2.1} & \reprosim{88.5}{0.7} & \reprosim{\best{42.1}}{1.3} & \reprosim{45.6}{0.8} & \reprosim{63.5}{0.2} & \reprosim{76.1}{0.2} & \reprosim{5.5}{1.1} & \reprosim{\best{4.5}}{0.7} & \reproref{0.0} & \reprosim{\best{98.2}}{5.6} & \reprosim{$-$3.0}{0.4}\\
Soft-KD & \reprosim{90.8}{0.9} & \reprosim{92.5}{0.5} & \reprosim{\best{42.5}}{1.2} & \reprosim{30.8}{1.7} & \reprosim{87.4}{0.6} & \reprosim{41.1}{0.4} & \reprosim{47.0}{2.6} & \reprosim{62.7}{0.4} & \reprosim{76.6}{0.3} & \reprosim{5.0}{1.2} & \reprosim{1.0}{1.7} & \reprosim{7.1}{0.3} & \reprosim{97.3}{2.4} & \reprosim{$-$1.6}{0.6}\\
Off-MOPD & \reprosim{\best{92.6}}{0.7} & \reprosim{92.7}{0.4} & \reprosim{41.4}{1.6} & \reprosim{30.3}{2.7} & \reprosim{87.2}{0.8} & \reprosim{40.2}{0.4} & \reprosim{\best{48.8}}{0.8} & \reprosim{63.1}{0.1} & \reprosim{\best{77.0}}{0.7} & \reprosim{\best{7.1}}{1.4} & \reprosim{2.2}{0.7} & \reprosim{\best{7.3}}{1.1} & \reprosim{96.9}{1.7} & \reprosim{\best{$-$0.6}}{0.5}\\
MOPD & \reprosim{89.5}{0.4} & \reprosim{\best{92.9}}{0.3} & \reprosim{41.0}{2.0} & \reprosim{31.5}{0.2} & \reprosim{\best{88.9}}{0.2} & \reprosim{41.7}{1.5} & \reprosim{48.0}{1.3} & \reprosim{\best{65.2}}{0.3} & \reprosim{76.6}{1.1} & \reprosim{6.4}{1.9} & \reprosim{1.7}{0.9} & \reprosim{5.7}{0.6} & \reprosim{97.9}{1.8} & \reprosim{$-$0.8}{0.6}\\
\addlinespace[3pt]
\midrule
\tableheadrow\textbf{\boldmath\agentthree{}} & \recoveryOOD{MATH} & \recoveryOOD{GSM} & \recoveryOOD{AIME24} & \recoveryOOD{AIME25} & \recoveryOOD{IFEval} & \recoveryOOD{IFBen} & \recoveryOOD{GPQA} & \recoveryOOD{MMLU} & \recoveryID{IC-SQL} & \recoveryID{ALF-s} & \recoveryID{ALF-u} & \recoveryID{WShop} & \colhead{ID Rec.} & \colhead{OOD $\Delta$}\\
\midrule
Base & \reprosim{82.8}{0.2} & \reprosim{92.4}{0.1} & \reprosim{27.1}{0.7} & \reprosim{21.7}{0.8} & \reprosim{81.9}{0.1} & \reprosim{25.7}{0.3} & \reprosim{42.4}{0.5} & \reprosim{65.6}{0.0} & \reprosim{61.7}{0.1} & \reprosim{18.6}{0.7} & \reprosim{10.4}{0.7} & \reprosim{3.6}{0.2} & \reproref{\na} & \reproref{0.0}\\
Teachers & \reprosim{84.8}{0.7} & \reprosim{92.7}{0.0} & \reprosim{29.4}{0.9} & \reprosim{22.3}{1.0} & \reprosim{84.5}{0.2} & \reprosim{27.7}{0.3} & \reprosim{49.0}{0.5} & \reprosim{65.3}{0.1} & \reprosim{78.6}{0.1} & \reprosim{82.1}{1.4} & \reprosim{80.6}{0.7} & \reprosim{62.2}{0.2} & \reproref{100.0} & \reproref{\na}\\
\midrule[.3pt]
Mix-RL & \reprosim{85.3}{0.4} & \reprosim{95.2}{0.2} & \reprosim{31.0}{0.9} & \reprosim{23.7}{1.1} & \reprosim{87.4}{0.8} & \reprosim{25.7}{0.9} & \reprosim{44.6}{1.3} & \reprosim{63.6}{0.1} & \reprosim{77.7}{1.1} & \reprosim{85.0}{1.4} & \reprosim{82.8}{1.3} & \reprosim{61.0}{0.5} & \reprosim{100.3}{1.2} & \reprosim{$+$2.1}{0.1}\\
Merge & \reprosim{84.5}{0.2} & \reprosim{93.9}{0.1} & \reprosim{25.9}{0.5} & \reprosim{20.5}{0.8} & \reprosim{84.8}{0.2} & \reprosim{28.0}{0.3} & \reprosim{43.6}{0.6} & \reprosim{64.0}{0.0} & \reprosim{78.5}{0.1} & \reprosim{81.4}{1.2} & \reprosim{82.3}{1.1} & \reprosim{60.4}{0.5} & \reprosim{99.0}{0.3} & \reprosim{$+$0.7}{0.2}\\
\midrule[.3pt]
SFT & \reprosim{80.2}{0.4} & \reprosim{91.9}{0.2} & \reprosim{24.9}{2.0} & \reprosim{20.3}{0.8} & \reprosim{82.0}{0.1} & \reprosim{20.8}{1.0} & \reprosim{41.1}{1.1} & \reprosim{62.5}{0.2} & \reprosim{77.5}{0.5} & \reprosim{80.7}{2.5} & \reprosim{\best{79.9}}{2.2} & \reprosim{\best{63.0}}{0.7} & \reprosim{99.0}{1.7} & \reprosim{$-$2.0}{0.2}\\
Soft-KD & \reprosim{79.2}{1.4} & \reprosim{92.4}{0.4} & \reprosim{19.0}{2.0} & \reprosim{18.6}{2.8} & \reprosim{78.6}{0.6} & \reprosim{20.9}{0.5} & \reprosim{43.4}{1.3} & \reprosim{61.0}{0.5} & \reprosim{78.3}{0.2} & \reprosim{79.3}{2.1} & \reprosim{79.1}{1.3} & \reprosim{61.2}{1.4} & \reprosim{97.6}{2.1} & \reprosim{$-$3.3}{0.8}\\
Off-MOPD & \reprosim{82.4}{2.3} & \reprosim{92.7}{0.2} & \reprosim{25.6}{1.0} & \reprosim{22.1}{1.9} & \reprosim{82.7}{0.6} & \reprosim{25.1}{0.8} & \reprosim{\best{45.6}}{1.1} & \reprosim{\best{64.9}}{0.1} & \reprosim{78.2}{1.3} & \reprosim{80.0}{0.7} & \reprosim{77.6}{1.5} & \reprosim{62.0}{1.4} & \reprosim{97.8}{1.7} & \reprosim{$+$0.2}{0.9}\\
MOPD & \reprosim{\best{83.7}}{1.3} & \reprosim{\best{92.9}}{0.2} & \reprosim{\best{26.9}}{0.9} & \reprosim{\best{22.4}}{1.4} & \reprosim{\best{83.9}}{0.5} & \reprosim{\best{26.1}}{0.4} & \reprosim{44.9}{1.0} & \reprosim{63.6}{0.3} & \reprosim{\best{79.3}}{1.3} & \reprosim{\best{83.6}}{1.9} & \reprosim{78.4}{3.0} & \reprosim{60.8}{0.5} & \reprosim{\best{99.3}}{1.0} & \reprosim{\best{$+$0.6}}{0.4}\\
\bottomrule
\end{tabularx}
\endgroup
\end{table}

We present capability recovery and retention for \textsc{Qwen3-8B} in \cref{tab:capability-recovery} to assess whether on-policy distillation recovers more expert capability than off-policy methods.
\mia{} mixes single-turn mathematics and instruction following with multi-turn InterCode-SQL, whereas \agentthree{} contains only multi-turn tasks.
SFT, Soft-KD, Offline-MOPD, and MOPD recover 96.9--98.2\% of the teachers' aggregate in-domain gains on \mia{} and 97.6--99.3\% on \agentthree{}.
Across all four models (\cref{tab:aggregate-models}), the maximum within-model spread in in-domain recovery rate is 3.5 percentage points on \mia{} and 4.6 on \agentthree{}. These comparisons show no consistent in-domain recovery advantage for on-policy distillation over tuned off-policy methods, and a larger teacher does not create one (\cref{sec:teacher-scale}). We discuss this result and give a theoretical analysis in \Cref{app:offpolicy-theory}.
OOD retention varies by model and setting: MOPD improves mean retention over SFT by 1.1 benchmark points, but does not consistently outperform it. Mix-RL and merging have mean OOD changes of $-1.1$ and $-2.7$ points, respectively.
Appendix~\ref{app:per-model} reports the same per-benchmark results for the other three models.

\begin{table}[t]
\papertablestyle
  \caption{\textbf{Recovery and retention across models.} Model-specific columns report in-domain recovery (\%); OOD~$\Delta$ averages the reported retention equally over all eight model--setting pairs. Full benchmark results are in Appendix~\ref{app:per-model}. }
  \label{tab:aggregate-models}
\footnotesize
  \centering
  \footnotesize
  \setlength{\tabcolsep}{2pt}
  \begin{tabularx}{\linewidth}{l*{9}{R}}
  \toprule
   & \multicolumn{2}{c}{\tableheading{\textsc{Qwen3-1.7B}}} & \multicolumn{2}{c}{\tableheading{\textsc{Ministral-3-3B}}} & \multicolumn{2}{c}{\tableheading{\textsc{Qwen3.5-4B}}} & \multicolumn{2}{c}{\tableheading{\textsc{Qwen3-8B}}} & \tableheading{Mean}\\
  \cmidrule(lr){2-3}\cmidrule(lr){4-5}\cmidrule(lr){6-7}\cmidrule(lr){8-9}
  \tableheadrow\tableheading{Method} & \multicolumn{1}{c}{\tableheading{\mia{}}} & \multicolumn{1}{c}{\tableheading{A$\times$3}} & \multicolumn{1}{c}{\tableheading{\mia{}}} & \multicolumn{1}{c}{\tableheading{A$\times$3}} & \multicolumn{1}{c}{\tableheading{\mia{}}} & \multicolumn{1}{c}{\tableheading{A$\times$3}} & \multicolumn{1}{c}{\tableheading{\mia{}}} & \multicolumn{1}{c}{\tableheading{A$\times$3}} & \multicolumn{1}{c}{\tableheading{OOD $\Delta$}}\\
  \midrule
  Mix-RL & \reprosim{92.3}{1.3} & \reprosim{101.9}{0.8} & \reprosim{93.2}{4.8} & \reprosim{92.7}{1.3} & \reprosim{103.5}{4.1} & \reprosim{102.6}{1.2} & \reprosim{102.3}{5.2} & \reprosim{100.3}{1.2} & $-1.1$\\
  Merge & \reprosim{86.2}{0.2} & \reprosim{88.3}{0.3} & \reprosim{99.7}{1.7} & \reprosim{77.6}{0.4} & \reprosim{101.8}{2.5} & \reprosim{86.8}{0.5} & \reprosim{98.9}{2.4} & \reprosim{99.0}{0.3} & $-2.7$\\
  \midrule[.3pt]
  SFT & \reprosim{94.7}{2.9} & \reprosim{\best{98.3}}{0.7} & \reprosim{95.5}{1.1} & \reprosim{95.1}{0.3} & \reprosim{98.6}{3.0} & \reprosim{96.4}{1.1} & \reprosim{\best{98.2}}{5.6} & \reprosim{99.0}{1.7} & $-3.6$\\
  Soft-KD & \reprosim{\best{98.2}}{2.0} & \reprosim{96.8}{1.7} & \reprosim{\best{98.4}}{1.3} & \reprosim{93.3}{1.2} & \reprosim{97.3}{3.4} & \reprosim{\best{97.0}}{1.0} & \reprosim{97.3}{2.4} & \reprosim{97.6}{2.1} & $-4.0$\\
  Off-MOPD & \reprosim{97.7}{2.3} & \reprosim{96.3}{2.1} & \reprosim{94.9}{4.4} & \reprosim{95.4}{1.3} & \reprosim{\best{98.7}}{3.2} & \reprosim{94.8}{1.3} & \reprosim{96.9}{1.7} & \reprosim{97.8}{1.7} & $-3.3$\\
  MOPD & \reprosim{96.9}{1.8} & \reprosim{96.8}{0.6} & \reprosim{97.0}{2.6} & \reprosim{\best{97.9}}{1.9} & \reprosim{97.8}{4.8} & \reprosim{95.9}{1.6} & \reprosim{97.9}{1.8} & \reprosim{\best{99.3}}{1.0} & \best{$-2.5$}\\
  \bottomrule
  \end{tabularx}
  \end{table}

\subsection{Distillation Efficiency}
\label{sec:cost}
\begin{figure}[t]
\setlength{\parskip}{0pt}
\setlength{\abovecaptionskip}{4pt}
\setlength{\belowcaptionskip}{0pt}
\centering
\includegraphics[width=\linewidth]{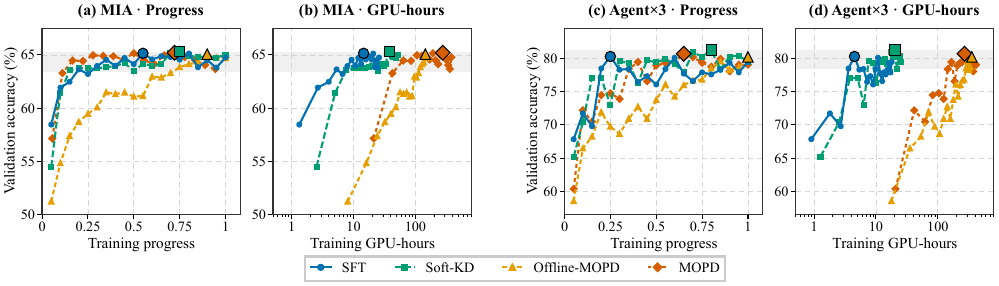}\par
\caption{\textbf{Validation accuracy versus training progress and GPU-hours.}
Accuracy is averaged across domains.
Training progress is the update count divided by the full-run budget; GPU-hours are cumulative and shown on a logarithmic scale. Markers identify the highest retained validation score for each method. }
\label{fig:cost}
\end{figure}

\citet{ma2026mopd} report each method's accuracy against training steps.
However, step counts alone do not capture computational cost across distillation methods when each one is individually tuned. For example, learning rates affect convergence speed, batch sizes determine steps per epoch, and on-policy updates also require student generation and teacher scoring.
We therefore report cumulative GPU-hours on eight B200 GPUs (\cref{app:training-config}).
\Cref{fig:cost} compares validation accuracy against training progress and cumulative GPU-hours for \textsc{Qwen3-8B}. At the full configured budgets, MOPD and SFT use 383.32 and 25.90 GPU-hours on \mia{} ($14.8\times$), and 415.80 and 18.00 GPU-hours on \agentthree{} ($23.1\times$), respectively (\cref{tab:compute}). Student rollout accounts for 86.5--96.0\% of OPD step time (\cref{tab:opd-step-timing}). The outlined markers retain the original best validation checkpoints; their costs exclude subsequent training and checkpoint selection.
It is worth noting that reported training GPU-hours do not include one-time teacher trajectory generation, as the teacher costs for on-policy and off-policy methods are not directly comparable.
SFT's data-generation cost is incurred once and amortized across multiple training runs and hyperparameter searches.
On the other hand, MOPD requires teacher forward passes to score new student trajectories in every training run. We discuss these two operational differences and report the generation cost in more detail in \cref{app:teacher-generation}.

\subsection{Alternatives to Distillation}
\label{sec:merge}

We compare distillation methods with weight merging and Mix-RL to assess whether expert capabilities can be recovered \textit{without} distillation training.
Weight merging is more effective at the larger model scale: averaged over both settings, \textsc{Qwen3-8B} recovers 99.0\% of the teachers' gains, compared with 87.3\% for \textsc{Qwen3-1.7B} (\cref{tab:aggregate-models}).
Merging also performs better in \mia{}, where the experts specialize in less overlapping domains: mean recovery across the four models is 96.6\%, compared with 87.9\% on \agentthree{}.
Constructing merged weights takes minutes of CPU, but selecting a merge still requires GPU validation (\cref{app:merging}).
Joint multi-domain RL (Mix-RL) also avoids a separate distillation stage by starting a single monolithic RL training run.
It recovers most of the teachers' gains across all eight model--setting pairs, approaching or exceeding teacher-level performance on \textsc{Qwen3.5-4B} and \textsc{Qwen3-8B} (\cref{tab:aggregate-models}).
However, changes to one domain's training recipe may require retuning the joint run, whereas separate experts support independent iteration before integration \citep{ma2026mopd}.

\section{Analysis and Discussion}
\label{sec:analysis}

\subsection{Do Larger Teachers Favor On-Policy Distillation?}
\label{sec:teacher-scale}
\begin{table}[!t]
\papertablestyle
\caption{\textbf{Teacher size with a fixed \textsc{Qwen3-1.7B} student.}
Each method distills from either 1.7B or 8B Qwen3 experts.
ID Rec.\ is the percentage of the corresponding teachers' aggregate in-domain gains over the 1.7B base recovered by the student (\cref{eq:recovery-rate}); }
\label{tab:scale}
\footnotesize
\centering
\footnotesize
\setlength{\tabcolsep}{2pt}
\begin{tabularx}{\linewidth}{l*{8}{R}}
\toprule
 & \multicolumn{2}{c}{\tableheading{\mia{} ID Rec.}} & \multicolumn{2}{c}{\tableheading{\mia{} OOD $\Delta$}} & \multicolumn{2}{c}{\tableheading{\agentthree{} ID Rec.}} & \multicolumn{2}{c}{\tableheading{\agentthree{} OOD $\Delta$}}\\
\cmidrule(lr){2-3}\cmidrule(lr){4-5}\cmidrule(lr){6-7}\cmidrule(lr){8-9}
\tableheadrow\tableheading{Method} & \multicolumn{1}{c}{\tableheading{1.7B}} & \multicolumn{1}{c}{\tableheading{8B}} & \multicolumn{1}{c}{\tableheading{1.7B}} & \multicolumn{1}{c}{\tableheading{8B}} & \multicolumn{1}{c}{\tableheading{1.7B}} & \multicolumn{1}{c}{\tableheading{8B}} & \multicolumn{1}{c}{\tableheading{1.7B}} & \multicolumn{1}{c}{\tableheading{8B}}\\
\midrule
SFT & \reprosim{94.7}{2.9} & \reprosim{92.5}{1.6} & \reprosim{$-$6.3}{0.3} & \reprosim{$-$9.2}{0.5} & \reprosim{98.3}{0.7} & \reprosim{99.3}{0.5} & \reprosim{$-$5.4}{0.9} & \reprosim{$-$14.8}{0.7}\\
Soft-KD & \reprosim{98.2}{2.0} & \reprosim{88.6}{1.5} & \reprosim{$-$7.7}{0.2} & \reprosim{$-$8.4}{0.1} & \reprosim{96.8}{1.7} & \reprosim{93.0}{1.2} & \reprosim{$-$4.1}{0.5} & \reprosim{$-$11.7}{0.7}\\
Off-MOPD & \reprosim{97.7}{2.3} & \reprosim{80.9}{1.0} & \reprosim{$-$6.9}{0.5} & \reprosim{$-$3.0}{0.8} & \reprosim{96.3}{2.1} & \reprosim{78.9}{0.9} & \reprosim{$-$2.6}{0.2} & \reprosim{$-$1.1}{0.7}\\
MOPD & \reprosim{96.9}{1.8} & \reprosim{77.4}{1.9} & \reprosim{$-$6.1}{0.8} & \reprosim{$-$2.3}{0.3} & \reprosim{96.8}{0.6} & \reprosim{73.3}{0.3} & \reprosim{$-$2.1}{0.5} & \reprosim{$-$3.3}{0.6}\\
\bottomrule
\end{tabularx}
\vspace{0.25em}
\end{table}

A larger teacher may provide stronger supervision while differing more from the student's initial behavior.
To examine whether using larger teachers favors on-policy distillation, we compare 1.7B and 8B experts with a fixed \textsc{Qwen3-1.7B} student (\cref{tab:scale}).
With 1.7B teachers, the four methods achieve similar in-domain recovery: 94.7--98.2\% on \mia{} and 96.3--98.3\% on \agentthree{}\footnote{Recovery is normalized by each teacher composition's gains over the student base, so comparisons across teacher sizes use different denominators.}. With 8B teachers, SFT and Soft-KD have higher in-domain recovery than both MOPD variants on \mia{} and the reported \agentthree{} results.
The gap is larger on \agentthree{}: SFT reaches 99.3\% and Soft-KD 93.0\%, whereas Offline-MOPD and MOPD reach 78.9\% and 73.3\%, respectively. On \agentthree{}, however, this recovery advantage comes at a large out-of-domain cost.
With 8B instead of 1.7B teachers, out-of-domain degradation grows from 5.4 to 14.8 points for SFT and from 4.1 to 11.7 for Soft-KD, with minimal changes on Offline-MOPD (2.6 to 1.1) and MOPD (2.1 to 3.3).
On \mia{}, SFT and Soft-KD experience larger OOD losses at 9.2 and 8.4 points, compared with 3.0 for Offline-MOPD and 2.3 for MOPD.
\Cref{tab:recovery-qwen3-1p7b-8bteacher} provides the per-benchmark results for the 8B teachers.

\subsection{Parameter Updates and Behavioral Changes}
\label{sec:behavior}
\begin{figure}[t]
\setlength{\parskip}{0pt}
\setlength{\abovecaptionskip}{2pt}
\setlength{\belowcaptionskip}{0pt}
\centering
\includegraphics[width=\linewidth]{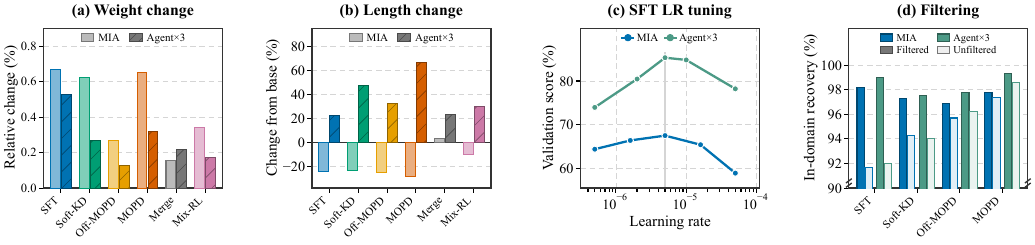}\par
\caption{\textbf{Model changes and training sensitivity on Qwen3-8B.}
\textbf{(a)} Attention-weight changes remain below $0.7\%$ across methods in both settings.
\textbf{(b)} Distillation shortens responses on \mia{} and lengthens them on \agentthree{}, with MOPD showing the largest response-length changes.
\textbf{(c)} SFT validation scores peak at an intermediate learning rate in both settings and vary by up to $11.3$ percentage points across the sweep.
\textbf{(d)} Rejection sampling yields the largest recovery gains for off-policy distillation methods. }
\label{fig:diagnostics}
\end{figure}

To assess how much each method alters the base model, we measure the Frobenius norm of the attention-weight update relative to that of the base weights (\cref{fig:diagnostics}a).
On \textsc{Qwen3-8B}, all relative changes are below $0.7\%$, with merging yielding the smallest changes on average.
Small weight changes coexist with substantial shifts in response length (\cref{fig:diagnostics}b).
Using geometric means of paired response-length ratios on in-domain tasks, distillation shortens responses by $23$--$29\%$ on \mia{} and lengthens them by $22$--$67\%$ on \agentthree{}.
MOPD produces the largest response-length change in both settings.
Merging increases response length by $3\%$ and $23\%$, respectively.
Overall, similar recovery scores can accompany different parameter updates and generation behavior.

\subsection{What Does It Take to Make SFT a Strong Baseline?}
\label{sec:training-sensitivity}
The strength of an SFT baseline depends on optimization and training data quality.
For \textsc{Qwen3-8B}, we sweep five learning rates, selecting the best validation checkpoint within five epochs at each rate (\cref{fig:diagnostics}c). Both settings peak at $5\times10^{-6}$; validation scores span $8.5$ percentage points on \mia{} and $11.3$ on \agentthree{}.
These differences show why SFT must be tuned independently before a performance gap can be attributed to the distillation objective. We also examine rejection sampling, which retains only correct or successful teacher trajectories. SFT recovery gaps are $6.5$ percentage points on \mia{} and $7.0$ on \agentthree{}, compared with $3.0$ and $3.6$ for Soft-KD and $0.5$--$1.6$ for Offline-MOPD and MOPD (\cref{fig:diagnostics}d).
Each method has a slightly larger gain on \agentthree{} than on \mia{}.
The result shows filtering benefits off-policy more than on-policy methods.

\subsection{Do Reported On-Policy Gains Persist with Stronger SFT Baselines?}
\label{sec:repro}
\begin{table}[t]
\papertablestyle
\centering
\caption{\textbf{Reproducing four published SFT--OPD comparisons.}
Baseline SFT is our initial reproduction; tuned SFT uses the original or unfiltered corpus; RS SFT additionally uses rejection-sampled teacher data. The two $\Delta$ columns compare successive SFT recipes. OPD's advantage over RS SFT is smaller than published in every comparison.}
\label{tab:main-ladder}
\fontsize{8}{9.5}\selectfont
\newcolumntype{G}{>{\hspace{4pt}}c}
\newcommand{\tfgroup}[1]{\multicolumn{2}{>{\hspace{4pt}}c}{\tableheading{#1}}}
\setlength{\tabcolsep}{2.5pt}
\resizebox{\linewidth}{!}{%
\begin{tabular}{lGcGcGcGcGc}
\toprule
 & \tfgroup{Teacher} & \tfgroup{Baseline SFT} & \tfgroup{OPD} & \tfgroup{Tuned SFT} & \tfgroup{RS SFT}\\
\cmidrule(l{6.5pt}r{2.5pt}){2-3}\cmidrule(l{6.5pt}r{2.5pt}){4-5}\cmidrule(l{6.5pt}r{2.5pt}){6-7}\cmidrule(l{6.5pt}r{2.5pt}){8-9}\cmidrule(l{6.5pt}r{0pt}){10-11}
\tableheadrow\tableheading{Study} & \tableheading{Paper} & \tableheading{Ours} & \tableheading{Paper} & \tableheading{Ours} & \tableheading{Paper} & \tableheading{Ours} & \tableheading{Score} & \tableheading{$\Delta$} & \tableheading{Score} & \tableheading{$\Delta$}\\
\midrule
Open-MOPD & 32.4 & \reprosim[20pt]{32.5}{0.2} & 25.7 & \reprosim[20pt]{23.8}{0.4} & 31.2 & \reprosim[20pt]{32.6}{0.8} & \reprosim[20pt]{25.1}{0.3} & $+1.3$ & \reprosim[20pt]{32.1}{0.6} & $+7.0$\\
SKD & 60.0 & \reprosim[20pt]{58.6}{1.1} & 18.7 & \reprosim[20pt]{18.4}{1.0} & 25.3 & \reprosim[20pt]{25.1}{0.6} & \reprosim[20pt]{23.1}{0.9} & $+4.7$ & \reprosim[20pt]{22.3}{1.0} & $-0.8$\\
TCOD & 81.3 & \reprosim[20pt]{84.3}{1.5} & 28.8 & \reprosim[20pt]{65.7}{0.9} & 63.1 & \reprosim[20pt]{78.0}{2.3} & \reprosim[20pt]{70.8}{2.6} & $+5.1$ & \reprosim[20pt]{75.2}{2.6} & $+4.4$\\
ReOPD & 49.9 & \reprosim[20pt]{55.6}{0.6} & 45.0 & \reprosim[20pt]{47.1}{1.2} & 51.0 & \reprosim[20pt]{54.7}{0.5} & \reprosim[20pt]{54.0}{0.8} & $+6.9$ & \reprosim[20pt]{54.6}{0.4} & $+0.6$\\
\bottomrule
\end{tabular}}
\end{table}

We revisit the SFT--OPD comparisons in Open-MOPD \citep{openmopd2026}, SKD \citep{xu2025skd}, TCOD \citep{wang2026tcod}, and ReOPD \citep{liao2026reopd} to assess whether reported on-policy gains persist with independently tuned SFT baselines and rejection-sampled teacher trajectories.
Only Open-MOPD shares our multi-teacher self-distillation setting, with domain experts trained from the student's initial checkpoint; the SKD, TCOD, and ReOPD settings we reproduce each distill a single larger teacher into a smaller student.
The results show SFT gains of 1.3--6.9 percentage points after tuning (\cref{tab:main-ladder}). Relative to tuned SFT, the RS recipes gain 7.0 points on Open-MOPD, 4.4 on TCOD, and 0.6 on ReOPD, and lose 0.8 on SKD.
Open-MOPD, TCOD, and ReOPD reproductions use reporting benchmarks for tuning; our main study uses held-out validation sets.
These recipe comparisons also differ in data, training budgets, or evaluation details (\cref{app:baseline-ablations}).

\section{Conclusion}
\label{sec:conclusion}
Across four models and two task compositions, we find no consistent in-domain recovery advantage for MOPD over tuned off-policy distillation. For \textsc{Qwen3-8B}, MOPD uses $14.8\times$ and $23.1\times$ SFT's training GPU-hours at the full configured budgets on \mia{} and \agentthree{}, respectively. Revisiting published comparisons also narrows their reported on-policy gains, with residual gaps that depend on the task and baseline recipe. 
Weight merging offers a training-free alternative, but its performance varies with model size and task composition.
Overall, we question recent gains reported due to MOPD and suggest careful tuning of more efficient off-policy and training-free baselines as viable and cheaper alternatives. We discuss limitations in \cref{app:limitations}.

\bibliographystyle{plainnat}
\bibliography{references}

\clearpage
\appendix
\crefalias{section}{appendix}
\crefalias{subsection}{appendix}
\crefalias{subsubsection}{appendix}
\makeappendixtitle

\newlength{\mergenumw}
\newcommand{\mgSel}{\raisebox{.1ex}{\fontsize{5.5}{5.5}\selectfont\ensuremath{\bigstar}}}
\newcommand{\mgGray}[1]{\textcolor{black!55}{#1}}
\newcommand{\mgNeg}[1]{\makebox[0pt][r]{\ensuremath{-}}#1}
\newcommand{\mgNA}{\makebox[\mergenumw][c]{--}}
\newcommand{\mgPanel}[2]{\tableheadrow & \multicolumn{6}{l}{\textbf{#1}\hspace{8pt}{\normalfont\scriptsize\textcolor{black!60}{#2}}}\\}
\newcommand{\mgHeadroom}[3]{& \multicolumn{2}{l}{\mgGray{\itshape Teacher gain over base (pts)}} & \mgGray{\itshape #1} & \mgGray{\itshape #2} & \mgGray{\itshape #3} & \\}

\setlength{\intextsep}{10pt plus 2pt minus 2pt}
\makeatletter
\setlength{\@fptop}{0pt}
\makeatother
\section{Reproducing Four Published SFT--OPD Comparisons}
\label{app:repro}
\label{app:baseline-ablations}
We describe the reproduction setups for the four SFT--OPD comparisons in
\cref{fig:repro,tab:main-ladder}. The subsections report individual benchmark or split results
with the procedures used to obtain them. We use released artifacts where
available. Our scope is only SFT--OPD comparisons, not an assessment of each study's proposed method or broader
contributions.

\subsection{Open-MOPD}
\label{app:repro-openmopd}

\paragraph{Models and data.}
Open-MOPD integrates mathematics, code, and instruction-following capabilities by
using domain experts to supervise student-generated responses \citep{openmopd2026}.
We initialize both methods from the released SmolLM3-3B MixSFT checkpoint and reuse
the released experts and evaluators. MixSFT is the published SFT reference,
distinct from the paper's rejection-fine-tuning (RFT) baseline. The published MixSFT
recipe uses unfiltered R1 and Nemotron traces and roughly balances response tokens
across mathematics, code, and instruction following (37:28:35, approximately 1:1:1).
The naive OPD sampler instead allocates prompts approximately 2:2:1; these are
different units of mixture balance.

\paragraph{Training and evaluation.}
Following the released recipe, we train the SFT baselines up to seven
epochs. Learning rates and epoch budgets are selected on the reporting benchmarks
and then fixed across three RS SFT training runs. We report their seven-epoch
checkpoints and three OPD runs at update 200, using the released OPD rate of
$1.5\times10^{-6}$. The original and rejection-sampled SFT recipes differ in both
teacher data and training budget. We evaluate on AIME 2024/2025, LiveCodeBench v5/v6, and IFEval/IFBench. \Cref{tab:omopd-ladder} provides the
individual benchmark scores.

\begin{table}[H]
\papertablestyle
\centering
\caption{\textbf{Open-MOPD results by benchmark.}
Domain means average two benchmarks; Avg. weights the three domains equally.
Published rows are taken from Table~2 of \citet{openmopd2026}: RouteRL,
MixSFT at epoch~4, and Open-MOPD. Teacher rows use routed experts.
RS denotes rejection sampling.
}
\label{tab:omopd-ladder}
\footnotesize
\setlength{\tabcolsep}{2.5pt}
\resizebox{\textwidth}{!}{%
\begin{tabular}{l rrr rrr rrr r}
\toprule
 & \multicolumn{3}{c}{\tableheading{Math}} & \multicolumn{3}{c}{\tableheading{Code}} & \multicolumn{3}{c}{\tableheading{IF}} & \\
\cmidrule(lr){2-4}\cmidrule(lr){5-7}\cmidrule(lr){8-10}
\tableheadrow\tableheading{Method} & \tableheading{AIME24} & \tableheading{AIME25} & \tableheading{Mean} & \tableheading{LCBv5} & \tableheading{LCBv6} & \tableheading{Mean} & \tableheading{IFEval} & \tableheading{IFBench} & \tableheading{Mean} & \tableheading{Avg.}\\
\midrule
Teacher, published & 23.7 & 24.8 & 24.2 & 22.2 & 21.3 & 21.7 & 74.5 & 27.7 & 51.1 & 32.4\\
Teacher, reproduced & 24.1 & 27.1 & 25.6 & 23.2 & 22.3 & 22.7 & 74.8 & 23.3 & 49.0 & 32.5\\
\midrule
\sft{} (MixSFT), published & 15.6 & 20.3 & 17.9 & 16.0 & 19.2 & 17.6 & 66.9 & 16.0 & 41.5 & 25.7\\
OPD, published & 22.0 & 22.9 & 22.4 & 20.8 & 22.6 & 21.7 & 74.5 & 24.7 & 49.6 & 31.2\\
\midrule
\sft{} baseline, reproduced & 11.9 & 17.5 & 14.7 & 14.7 & 18.4 & 16.5 & 64.1 & 15.9 & 40.0 & 23.8\\
Tuned \sft{}, original corpus & 13.1 & 19.8 & 16.5 & 15.6 & 20.0 & 17.8 & 65.9 & 16.4 & 41.2 & 25.1\\
Tuned \sft{}, RS & 26.8 & 25.3 & 26.1 & 24.1 & 23.5 & 23.8 & 70.9 & 21.8 & 46.4 & 32.1\\
OPD, reproduced & 24.4 & 26.2 & 25.3 & 24.4 & 22.4 & 23.4 & 75.2 & 22.8 & 49.0 & 32.6\\
\midrule
OPD $-$ RS \sft{} & $-2.4$ & $+0.9$ & $-0.8$ & $+0.3$ & $-1.1$ & $-0.4$ & $+4.3$ & $+1.0$ & $+2.6$ & $+0.5$\\
\bottomrule
\end{tabular}}
\end{table}

\subsection{SKD}
\label{app:repro-skd}

\paragraph{Models and data.}
SKD interleaves student-proposed tokens with teacher replacements to construct
distillation trajectories \citep{xu2025skd}. We reproduce its SFT and OPD
baselines on GSM8K. We use a Gemma-2B-it student and a Gemma-7B-it teacher. The release provides
training code but no teacher checkpoint, so we retrain the teacher on the
authors' 7{,}000-example training set. The published SFT baseline uses 1{,}000 gold
solutions, while OPD supervises student responses to the same prompts. For
teacher-data SFT, we generate one greedy response per prompt and compare the
full corpus with a rejection-sampled corpus that retains responses matching
the gold answer. Filtering drops unsolved prompts without resampling, changing
both corpus size and prompt coverage.

\paragraph{Training and evaluation.}
The released SFT loss includes prompt tokens, whereas the distillation trainer
supervises only responses. We train SFT with response-only loss, disable packing,
and match the distillation batch size of eight. The release uses a shared learning
rate of $10^{-5}$ without a documented SFT-specific sweep; we select learning
rates independently and select checkpoints using validation accuracy.
In separate matched controls, each method receives 375 updates at global batch
size eight. These controls use two teacher draws and three training seeds, with
learning rates selected for each method and teacher using a separate sweep seed
and the grid $\{3\times10^{-6},5\times10^{-6},10^{-5},2\times10^{-5},5\times10^{-5}\}$.
Checkpoint selection
maximizes development accuracy, breaking ties in favor of the earlier step. We evaluate on the 1{,}319-item GSM8K test set with greedy decoding, a 512-token
limit, and a double-newline stop sequence. Baseline reproductions use the released
grader, which requires the entire answer line after \texttt{\#\#\#\#} to match
the gold answer. Teacher-data comparisons and matched controls instead extract
the first number on that line, preventing trailing role markers from invalidating
a correct numeric response.

\subsection{TCOD}
\label{app:repro-tcod}

\paragraph{Models and data.}
TCOD gradually expands the portion of an interaction trajectory assigned to the
student during on-policy distillation \citep{wang2026tcod}. We reproduce its SFT
and OPD baselines on ALFWorld. We use a Qwen2.5-3B-Instruct student. Both
methods use an ALFWorld-trained Qwen2.5-7B-Instruct teacher. We collect teacher trajectories on the authors' 3{,}553 training
games, allowing up to six attempts per game and retaining 3{,}342 successful
trajectories for SFT. OPD generates student rollouts on the full training game
list. The release provides OPD code; we reconstruct the SFT pipeline from the
teacher trajectories.

\paragraph{Training and evaluation.}
Our SFT recipe treats each teacher action as a separate target and supervises
actions throughout the episode. This replaces the framework's default multi-turn
formatting, which supervises only the final assistant message. SFT trains for two
epochs with a 4{,}096-token limit, while OPD trains for 250 updates. We tune the
learning rate for each method on the reporting splits, selecting $2\times10^{-5}$
for both; the released OPD rate is $10^{-6}$. The selected rates and training
budgets are fixed across subsequent runs, and we report terminal checkpoints.
The SFT development split is constructed by turn and can share games with training. Both methods use the released \emph{accumulate} protocol, which retains the full
dialogue and repeats the two latest observation--action pairs in each new prompt.

\begin{table}[H]
  \papertablestyle
  \centering
  \caption{\textbf{TCOD results by ALFWorld split.}
  Scores are success rates (\%). Avg. gives the equal-weight mean of seen and unseen.
}
  \label{tab:tcod-ladder}
  \footnotesize
  \setlength{\tabcolsep}{7pt}
  \begin{tabular}{lrrr}
  \toprule
  \tableheadrow\tableheading{Method} & \tableheading{Seen (140)} & \tableheading{Unseen (134)} & \tableheading{Avg.}\\
  \midrule
  Teacher, published & 85.7 & 76.9 & 81.3\\
  Teacher, reproduced & 84.2 & 84.3 & 84.3\\
  \midrule
  \sft{}, published & 32.1 & 25.4 & 28.8\\
  OPD, published & 65.7 & 60.4 & 63.1\\
  \midrule
  \sft{} baseline, reproduced & 68.6 & 62.9 & 65.7\\
  Tuned \sft{}, without RS & 73.1 & 68.4 & 70.8\\
  Tuned \sft{}, RS & 77.4 & 73.0 & 75.2\\
  OPD, reproduced & 81.4 & 74.6 & 78.0\\
  \midrule
  OPD $-$ RS \sft{} & +4.0 & +1.6 & +2.8\\
  \bottomrule
  \end{tabular}
  \end{table}

\subsection{ReOPD}
\label{app:repro-reopd}

\paragraph{Models and data.}
ReOPD reuses teacher interaction histories as prefixes for student sampling during
distillation, avoiding fresh environment rollouts \citep{liao2026reopd}. We reproduce
its SFT and OPD baselines in the mathematics setting. We use the released
Qwen3-4B-Instruct-2507 student, initialized by supervised
training on 2{,}000 ReTool trajectories, and the RL-trained Qwen3-8B teacher.
Teacher training and student distillation use 6{,}400 DAPO prompts. The release
provides checkpoints and OPD code; we reconstruct the teacher-trajectory SFT
pipeline. We generate four teacher candidates per training prompt, constructing
an unfiltered corpus of 6{,}335 trajectories and a rejection-sampled corpus of
5{,}071.

\paragraph{Training and evaluation.}
SFT supervises every assistant turn while masking prompts and tool observations.
OPD supervises student-generated rollouts through the Python-tool environment.
We train SFT for five epochs, giving 2{,}274 updates on the unfiltered corpus and
1{,}719 on the rejection-sampled corpus. Independent learning-rate sweeps select
$5\times10^{-6}$ for unfiltered SFT, $10^{-5}$ for rejection-sampled SFT, and
$3\times10^{-6}$ for OPD, compared with the published SFT and OPD rates of
$5\times10^{-5}$ and $10^{-6}$. We follow the mathematics evaluation script with temperature 1.0, top-$p$ 1,
and at most 16 tool calls. Scores average AIME 2024/2025, AMC 2023, MATH500,
Minerva Math, and OlympiadBench with equal weights.
\Cref{tab:reopd-ladder} reports the individual benchmark scores.

\begin{table}[H]
\papertablestyle
\centering
\caption{\textbf{ReOPD results by mathematics benchmark.}
Avg. weights the six benchmarks equally. Reproduced benchmark values use the
original evaluator.}
\label{tab:reopd-ladder}
\footnotesize
\setlength{\tabcolsep}{3.5pt}
\resizebox{\textwidth}{!}{%
\begin{tabular}{lrrrrrrr}
\toprule
\tableheadrow\tableheading{Method} & \tableheading{AIME24} & \tableheading{AIME25} & \tableheading{AMC23} & \tableheading{MATH500} & \tableheading{Minerva} & \tableheading{Olympiad} & \tableheading{Avg.}\\
\midrule
Teacher, published & 28.3 & 23.3 & 73.1 & 84.3 & 38.5 & 52.0 & 49.9\\
Teacher, reproduced & 35.4 & 27.9 & 80.9 & 87.9 & 44.9 & 56.5 & 55.6\\
\midrule
\sft{}, published & 22.1 & 22.9 & 63.8 & 80.8 & 36.0 & 44.7 & 45.0\\
OPD, published & 28.3 & 26.3 & 70.3 & 85.8 & 39.8 & 55.3 & 51.0\\
\midrule
\sft{} baseline, reproduced & 24.6 & 25.0 & 71.3 & 80.2 & 32.6 & 49.2 & 47.1\\
Tuned \sft{}, without RS & 34.6 & 29.2 & 77.5 & 86.2 & 42.5 & 53.9 & 54.0\\
Tuned \sft{}, RS & 34.6 & 30.4 & 80.0 & 86.0 & 44.1 & 52.6 & 54.6\\
OPD, reproduced & 37.1 & 27.1 & 81.6 & 86.6 & 39.2 & 56.7 & 54.7\\
\midrule
OPD $-$ RS \sft{} & +2.5 & -3.3 & +1.6 & +0.6 & -4.9 & +4.1 & +0.1\\
\bottomrule
\end{tabular}}
\end{table}

\section{Additional Related Work}
\label{app:related}
Sequence-level distillation trains students on teacher-generated outputs \citep{kim2016sequence}.
Later work enriches supervised training with synthetic instruction--response pairs, as in Self-Instruct \citep{wang2023selfinstruct}, and teacher-generated rationales, as in Distilling Step-by-Step \citep{hsieh2023stepbystep}.
Selecting the generated examples is also consequential: STaR and rejection-sampling fine-tuning retain reasoning traces that yield correct answers \citep{zelikman2022star,yuan2023rft}; RAFT selects examples by reward \citep{dong2023raft}; and ReST$^{\mathrm{EM}}$ alternates generation, filtering with binary feedback, and fine-tuning \citep{singh2024restem}.
The strength of a supervised baseline also depends on optimization: well-tuned baselines can outperform newer methods \citep{melis2018state}, where the rankings can change with the hyperparameter-search budget \citep{dodge2019show}. In this work, we examine how these data and tuning choices affect reported on-policy gains and whether on-policy supervision improves capability recovery for self-distillation setup.

\section{Model Merging Methods and Evaluation}
\label{app:merging}

We evaluate eleven merging methods using \textsc{MergeKit}
\citep{goddard-etal-2024-arcees}. \Cref{tab:merge-methods} summarizes the
hyperparameters explored across students and setups.
For each student and setup, we first screen methods at default coefficients, then tune
the leading methods' coefficients and sparsity settings. Hyperparameter search effort
varies across methods, and
\Cref{tab:merge-methods} lists all the search settings. \cref{tab:mia-merge,tab:agent3-merge}
report selected configurations and sweep results. Candidates are checked for tensor-count mismatches, nonfinite values, relative $L_2$
distance from the base, and malformed tool calls to filter out failed merged results.

The selected merge appears in the Merge row of \cref{app:per-model}, evaluated on the
full suite with three evaluation seeds and the same harness, hardware, and decoding
settings as the distillation arms (\cref{app:splits}). Merge selection uses these same validation sets as distillation checkpoints (\cref{app:selection}). Sweep tables report recovery of each expert's gain over the base
(\cref{eq:recovery-rate}), averaged equally across expert domains.

\begin{table}[!htbp]
\papertablestyle
\caption{\textbf{Merging hyperparameters.}
Singleton values denote fixed settings.
Per-expert coefficient configurations appear in \cref{tab:mia-merge,tab:agent3-merge}.}
\label{tab:merge-methods}
\centering
\footnotesize
\setlength{\tabcolsep}{6pt}
\begin{tabular}{ll}
\toprule
\tableheadrow\tableheading{Method} & \tableheading{Hyperparameters and values}\\
\midrule
Model soup \citep{wortsman2022soups} & Expert weights: $1/3$ each\\
Task arithmetic \citep{ilharco2023taskarithmetic} & Per-expert coefficient: $[0.3,2.0]$\\
TIES \citep{yadav2023ties} & Retained density: $\{.3,.5,.7,.75,.9\}$\\
DARE-TIES \citep{yu2024dare} & Retained density: $.5$\\
DARE-linear \citep{yu2024dare} & Per-expert retained density: $\{.6,1\}$\\
DELLA \citep{deep2024della} & Retained density: $.5$; drop-probability range: $.15$\\
Breadcrumbs \citep{davari2024breadcrumbs} & Retained density: $\{.5,.6\}$; outlier fraction: $\{.01,.02\}$\\
Breadcrumbs-TIES \citep{davari2024breadcrumbs} & Retained density: $\{.5,.9\}$; outlier fraction: $.01$\\
SCE \citep{wan2024fusechat} & Top-$k$: $100\%$\\
Model Stock \citep{jang2024modelstock} & Layer-wise weighting: off\\
Karcher mean \citep{karcher1977riemannian} & Iterations: 10; tolerance: $10^{-5}$\\
\midrule
Depth ramp (modifier) \citep{wang2024lines} & Shallow and deep coefficients, reported as their mean\\
\bottomrule
\end{tabular}
\end{table}

\begin{table}[!htbp]
\centering
\papertablestyle

\footnotesize
\setlength{\tabcolsep}{5pt}
\settowidth{\mergenumw}{103.6}
\renewcommand{\arraystretch}{1.06}
\caption{\textbf{Merge sweeps on \mia{}.}
\protect\mgSel{} marks the selected
merge (the Merge row of \cref{app:per-model}) and $^{\rho}$ a depth-ramped coefficient, all three when
set on the tuple; bold is each panel's best average, and gray averages cover the measured axes only.}\label{tab:mia-merge}
\begin{tabular}{cll>{\hspace{10pt}}rrr>{\hspace{12pt}}r}
\toprule
& & & \multicolumn{4}{c}{\tableheading{Recovery (\%)}}\\
\cmidrule(l{10pt}){4-7}
\tableheadrow & \tableheading{Method} & \tableheading{Coefficients} $(c_{\text{IF}},c_{\text{Math}},c_{\text{IC}})$ & \tableheading{IF} & \tableheading{Math} & \tableheading{IC-SQL} & \tableheading{Avg.}\\

\midrule
\mgPanel{\mainmodel{}}{all 8 configurations shown, 4 methods}
\mgHeadroom{29.5}{11.7}{16.1}
\mgSel & Task arith.\ & $(0.70,1.00,0.70)$ & 103.1 & 159.2 & 103.6 & \best{122.0}\\
& Task arith.\ & $(0.70,0.70,0.70)$ & 86.2 & 172.0 & 100.6 & 119.6\\
& SCE & top-$k$ $100\%$ & 63.0 & 176.4 & 103.6 & 114.3\\
& TIES ($d{=}.5$) & $(1.00,1.00,1.00)$ & 62.2 & 158.8 & 104.8 & 108.6\\
& Task arith.\ & $(0.70,1.00,0.70)^{\rho}$ & 91.4 & 121.4 & 103.6 & 105.5\\
& Task arith.\ & $(0.70,1.00,0.85)$ & 98.6 & 104.7 & 108.4 & 103.9\\
\addlinespace[2pt]
& Model soup & equal thirds & 46.5 & 144.1 & \mgNA & \mgGray{95.3}\\
& Task arith.\ & $(0.70,1.25,0.70)$ & 87.9 & 53.8 & \mgNA & \mgGray{70.9}\\
\addlinespace[4pt]
\mgPanel{\textsc{Qwen3-8B}}{10 of 13 configurations shown, 4 methods}
\mgHeadroom{4.3}{30.6}{15.9}
& Task arith.\ & $(0.70,1.30,0.70)$ & 156.5 & 82.9 & 103.7 & \best{114.4}\\
& Task arith.\ & $(0.70,0.70,0.70)$ & 152.2 & 73.9 & 101.2 & 109.1\\
& Task arith.\ & $(0.70,1.00,0.70)$ & 143.5 & 85.4 & 98.2 & 109.0\\
\mgSel & Task arith.\ & $(0.50,1.30,0.70)$ & 113.0 & 105.2 & 98.8 & 105.7\\
& Task arith.\ & $(1.00,0.70,0.70)$ & 152.2 & 62.4 & 98.2 & 104.3\\
& TIES ($d{=}.5$) & $(1.00,1.00,1.00)$ & 139.1 & 68.1 & 103.7 & 103.6\\
& TIES ($d{=}.5$) & $(0.70,1.60,0.70)$ & 121.7 & 92.4 & 90.9 & 101.7\\
& SCE & top-$k$ $100\%$ & 143.5 & 59.3 & 101.2 & 101.3\\
& Model soup & equal thirds & 56.5 & 41.0 & 63.4 & 53.6\\
\addlinespace[2pt]
& Task arith.\ & $(0.70,1.60,0.70)$ & 152.2 & \mgNeg{117.8} & \mgNA & \mgGray{17.2}\\
\addlinespace[4pt]
\mgPanel{\textsc{Qwen3-1.7B}}{10 of 18 configurations shown, 4 methods}
\mgHeadroom{34.3}{29.9}{40.8}
& SCE & top-$k$ $100\%$ & 73.0 & 53.8 & 103.6 & \best{76.8}\\
& TIES ($d{=}.7$) & $(1.30,1.30,0.70)$ & 77.8 & 54.1 & 98.1 & 76.7\\
& TIES ($d{=}.5$) & $(1.30,1.30,0.70)$ & 77.2 & 54.6 & 98.1 & 76.7\\
\mgSel & Task arith.\ & $(0.70,0.70,0.70)$ & 77.3 & 46.1 & 103.3 & 75.6\\
& TIES ($d{=}.5$) & $(1.00,1.00,1.00)$ & 67.1 & 54.8 & 103.6 & 75.2\\
& TIES ($d{=}.5$) & $(1.10,1.30,0.70)$ & 65.0 & 56.3 & 103.3 & 74.9\\
& Task arith.\ & $(0.85,0.85,0.85)$ & 102.5 & \mgNeg{3.0} & 97.9 & 65.8\\
& Model soup & equal thirds & 26.2 & 14.2 & 100.2 & 46.9\\
& Task arith.\ & $(1.00,1.00,1.00)$ & 107.7 & \mgNeg{100.5} & 95.0 & 34.1\\
& Task arith.\ & $(0.80,1.28,0.70)^{\rho}$ & 94.0 & \mgNeg{203.9} & 99.8 & \mgNeg{3.4}\\
\addlinespace[4pt]
\mgPanel{\textsc{Ministral-3-3B}}{10 of 54 configurations shown, 5 methods}
\mgHeadroom{27.1}{12.4}{47.7}
& TIES ($d{=}.5$) & $(1.50,0.55,0.55^{\rho})$ & 131.1 & 135.1 & 99.8 & \best{122.0}\\
& Task arith.\ & $(1.50,0.55,0.55^{\rho})$ & 131.2 & 134.4 & 100.2 & 121.9\\
& Task arith.\ & $(1.50,0.55,0.55)$ & 125.5 & 142.2 & 98.0 & 121.9\\
& DARE-TIES ($d{=}.5$) & $(1.50^{\rho},0.55,0.55^{\rho})$ & 132.6 & 102.6 & 97.6 & 110.9\\
\mgSel & TIES ($d{=}.7$) & $(1.50,0.55,0.55^{\rho})$ & 118.6 & 112.7 & 100.6 & 110.6\\
& Task arith.\ & $(0.70,0.70,0.70)$ & 83.6 & 128.5 & 100.0 & 104.0\\
& Model soup & equal thirds & 48.4 & 167.5 & 90.3 & 102.1\\
& Task arith.\ & $(1.00,1.00,1.00)$ & 87.0 & 64.3 & 98.6 & 83.3\\
& TIES ($d{=}.5$) & $(1.00,1.00,1.00)$ & \mgNeg{11.9} & 68.1 & 99.4 & 51.9\\
& SCE & top-$k$ $100\%$ & \mgNeg{23.0} & 28.8 & 99.6 & 35.2\\
\bottomrule
\end{tabular}
\end{table}

\begin{table}[!htbp]
\centering
\papertablestyle

\footnotesize
\setlength{\tabcolsep}{3.5pt}
\settowidth{\mergenumw}{103.6}
\renewcommand{\arraystretch}{1.06}
\caption{\textbf{Merge sweeps on \agentthree{}.}
\protect\mgSel{} marks the selected
merge (the Merge row of \cref{app:per-model}) and $^{\rho}$ a depth-ramped coefficient, all three when
set on the tuple; bold is each panel's best average, and gray averages cover the measured axes only.}\label{tab:agent3-merge}
\begin{tabular}{cll>{\hspace{6pt}}rrr>{\hspace{5pt}}r}
\toprule
& & & \multicolumn{4}{c}{\tableheading{Recovery (\%)}}\\
\cmidrule(l{7pt}){4-7}
\tableheadrow & \tableheading{Method} & \tableheading{Coefficients} $(c_{\text{IC}},c_{\text{ALF}},c_{\text{WS}})$ & \tableheading{IC-SQL} & \tableheading{ALFWorld} & \tableheading{WebShop} & \tableheading{Avg.}\\

\midrule
\mgPanel{\mainmodel{}}{10 of 32 configurations shown, 6 methods}
\mgHeadroom{15.7}{72.1}{74.4}
\mgSel & Task arith.\ & $(1.25,0.95,0.80)^{\rho}$ & 73.5 & 85.2 & 89.3 & \best{82.6}\\
& Task arith.\ & $(1.25,0.85,0.80)^{\rho}$ & 60.5 & 82.2 & 91.9 & 78.2\\
& Task arith.\ & $(1.25,0.85,0.80)$ & 37.0 & 84.2 & 90.9 & 70.7\\
& Task arith.\ & $(1.25,0.92,0.90)$ & 9.9 & 88.1 & 94.9 & 64.3\\
& Model soup & equal thirds & 71.6 & 36.6 & 16.9 & 41.7\\
& Task arith.\ & $(0.70,0.70,0.70)$ & \mgNeg{116.7} & 80.2 & 81.2 & 14.9\\
& SCE & top-$k$ $100\%$ & \mgNeg{122.2} & 75.3 & 76.3 & 9.8\\
& TIES ($d{=}.5$) & $(1.00,1.00,1.00)$ & \mgNeg{138.3} & 79.2 & 77.7 & 6.2\\
\addlinespace[2pt]
& Breadcrumbs ($d{=}.6$) & $(1.41,0.94,0.89)$ & \mgNA & 77.2 & 81.7 & \mgGray{79.5}\\
& DARE-linear ($d{=}.6/1$) & $(1.25,0.85,0.70)$ & \mgNA & 79.2 & 66.9 & \mgGray{73.1}\\
\addlinespace[4pt]
\mgPanel{\textsc{Qwen3-8B}}{all 6 configurations shown, 3 methods}
\mgHeadroom{15.6}{55.7}{69.8}
\mgSel & Task arith.\ & $(1.25,0.95,0.80)$ & 106.2 & 111.5 & 100.9 & \best{106.2}\\
& Task arith.\ & $(1.25,0.95,0.65)$ & 100.0 & 109.0 & 99.1 & 102.7\\
& Task arith.\ & $(1.25,1.05,0.65)$ & 101.3 & 105.1 & 95.1 & 100.5\\
& TIES ($d{=}.5$) & $(1.00,1.00,1.00)$ & 91.3 & 89.7 & 101.2 & 94.1\\
& Task arith.\ & $(1.25,0.95,0.50)$ & 102.5 & 106.4 & 63.3 & 90.8\\
& Model soup & equal thirds & 63.4 & 26.9 & 59.9 & 50.1\\
\addlinespace[4pt]
\mgPanel{\textsc{Qwen3-1.7B}}{all 10 configurations shown, 3 methods}
\mgHeadroom{40.8}{82.1}{18.4}
\mgSel & Task arith.\ & $(0.30,0.55,0.90)^{\rho}$ & 113.5 & 92.2 & 96.7 & \best{100.8}\\
& Task arith.\ & $(1.25,0.55,0.80)$ & 120.9 & 91.3 & 66.3 & 92.8\\
& Task arith.\ & $(0.30,0.55,0.90)$ & 107.4 & 92.2 & 75.0 & 91.5\\
& Task arith.\ & $(0.70,0.70,0.70)$ & 117.5 & 95.7 & 55.4 & 89.5\\
& Task arith.\ & $(0.80,0.30,0.90)$ & 104.0 & 33.9 & 129.4 & 89.1\\
& Task arith.\ & $(0.30,0.33,0.90)$ & 102.8 & 38.3 & 116.3 & 85.8\\
& Task arith.\ & $(1.25,0.95,0.80)$ & 120.6 & 96.5 & 33.7 & 83.6\\
& TIES ($d{=}.5$) & $(1.00,1.00,1.00)$ & 107.4 & 96.5 & 17.4 & 73.8\\
& Task arith.\ & $(1.25,0.30,1.50)$ & 87.2 & 28.7 & 98.9 & 71.6\\
& Model soup & equal thirds & 105.0 & 36.5 & 41.3 & 60.9\\
\addlinespace[4pt]
\mgPanel{\textsc{Ministral-3-3B}}{11 of 64 configurations shown, 9 methods}
\mgHeadroom{33.2}{60.7}{62.8}
& Breadcrumbs-TIES ($d{=}.5$) & $(1.00,0.43,0.61)$ & 85.7 & 58.8 & 83.8 & \best{76.1}\\
& Task arith.\ & $(1.00,0.43,0.60)$ & 67.9 & 75.3 & 77.1 & 73.4\\
& TIES ($d{=}.5$) & $(1.00,0.43,0.61)$ & 75.5 & 63.5 & 79.6 & 72.9\\
\mgSel & Task arith.\ & $(1.00,0.43,0.61)$ & 62.7 & 76.5 & 78.0 & 72.4\\
& Model soup & equal thirds & 35.0 & 80.0 & 50.3 & 55.1\\
& DARE-TIES ($d{=}.5$) & $(1.00,0.43,0.61)$ & 20.1 & 77.7 & 48.7 & 48.8\\
& DELLA ($d{=}.5$) & $(1.00,0.43,0.61)$ & 8.2 & 75.3 & 46.2 & 43.2\\
& Task arith.\ & $(0.70,0.70,0.70)$ & \mgNeg{82.2} & 98.8 & 15.0 & 10.5\\
\addlinespace[2pt]
& Karcher mean & fixed $1/K$ & \mgNA & \mgNA & 41.7 & \mgGray{41.7}\\
& SCE & top-$k$ $100\%$ & \mgNA & \mgNA & 38.2 & \mgGray{38.2}\\
& Model Stock & cosine-derived & \mgNA & \mgNA & 0.3 & \mgGray{0.3}\\
\bottomrule
\end{tabular}
\end{table}

\section{Experimental Details}
\label{app:experimental}

\subsection{Teachers and Distillation Corpus}
\label{app:corpus}
In the main self-distillation experiments, teachers share their student's initialization, tokenizer, and chat template. Each teacher is RL-trained from that initialization, and we evaluate the selected teacher checkpoints using the same evaluation harness as the students. For each student and setup, we construct one rejection-sampled teacher corpus \citep{yuan2023rft}: we sample teacher trajectories and retain at most one trajectory with verified reward $1$ per prompt. Generation costs are reported in \cref{app:teacher-generation}. SFT, Soft-KD, and Offline-MOPD use the recorded prompt--trajectory pairs; MOPD uses the same prompts and generates its own trajectories during training. We verify that the prompt sets match by comparing their hashes.
All four methods route prompts to teachers by the known task domain, so the comparison measures capability integration independently of router quality.

\subsection{Training}
\label{app:training-config}
\label{app:method-choices}
We train SFT and Soft-KD with TRL \citep{vonwerra2020trl}. Offline-MOPD, MOPD, and RL teacher training use \textsc{verl} \citep{sheng2025hybridflow}, with \textsc{verl-agent} for multi-turn tasks \citep{feng2025gigpo}. We use vLLM to serve student rollouts and teacher scoring in the two MOPD variants \citep{kwon2023vllm}.
MOPD uses one rollout per prompt and optimizes per-token reverse KL (\cref{eq:reverse-kd}). Offline-MOPD uses the same trainer as MOPD but replaces live environment interaction with replay of recorded teacher observations: in \agentthree{}, MOPD executes student actions to obtain fresh environment observations, whereas Offline-MOPD conditions each student response on the recorded teacher history. For single-turn tasks, there is no interaction history to replay, so Offline-MOPD samples a complete student response from the original prompt and applies the same reverse-KL objective as MOPD.

\FloatBarrier
\subsection{Checkpoint Selection and Evaluation}
\label{app:selection}
\Cref{tab:hparams} lists the selected learning rates and batch sizes. Each distillation run starts from the base checkpoint and trains for five epochs over the shared prompts. We use three learning rates per grid, re-center each grid on the nearest previously selected rate for that model and setup, and extend it when the best rate lies at an endpoint. Training and evaluation use the same prompt format, stop tokens, and prompt and response length limits. We choose the evaluated checkpoint with the highest unweighted mean validation score across the setup's domains. Validation uses model-generated rollouts under the same held-out evaluation harness.
\Cref{tab:splits} defines the training, validation, and test splits for each domain; \cref{tab:suites} lists the evaluation benchmarks, metrics, and test-set sizes. We split InterCode-SQL by Spider database \citep{yu2018spider} to prevent queries sharing a schema across splits. For mathematics and instruction following, we use the teachers' held-out RL validation splits after removing prompts in the student's training corpus.

\begin{table}[H]
\papertablestyle
\caption{\textbf{Training configuration.}
Selected learning rates differ across students and setups, so each default grid is re-centered for every student and setup as described in the text. }
\label{tab:hparams}
\footnotesize
\centering

\vspace{6pt}
\setlength{\tabcolsep}{2pt}
\begin{tabularx}{\linewidth}{l*{8}{C}}
\toprule
 & \multicolumn{2}{c}{\tableheading{\textsc{Qwen3-1.7B}}} & \multicolumn{2}{c}{\tableheading{\textsc{Ministral-3-3B}}} & \multicolumn{2}{c}{\tableheading{\textsc{Qwen3.5-4B}}} & \multicolumn{2}{c}{\tableheading{\textsc{Qwen3-8B}}}\\
\cmidrule(lr){2-3}\cmidrule(lr){4-5}\cmidrule(lr){6-7}\cmidrule(lr){8-9}
\tableheadrow\tableheading{Method} & \tableheading{\mia{}} & \tableheading{A$\times$3} & \tableheading{\mia{}} & \tableheading{A$\times$3} & \tableheading{\mia{}} & \tableheading{A$\times$3} & \tableheading{\mia{}} & \tableheading{A$\times$3}\\
\midrule
\multicolumn{9}{l}{\textit{Selected learning rate ($\times10^{-6}$)}}\\
SFT & 20 & 10 & 2 & 5 & 10 & 5 & 5 & 5\\
Soft-KD & 5 & 6 & 0.63 & 10 & 16 & 16 & 6.3 & 5\\
Off-MOPD & 1 & 2 & 0.2 & 0.95 & 1 & 3.2 & 1 & 3\\
MOPD & 1.5 & 1  & 0.63 & 0.95 & 1 & 3.2 & 3 & 3\\
\addlinespace[3pt]
\multicolumn{9}{l}{\textit{Batch size}}\\
SFT & 32 & 32 & 32 & 32 & 32 & 32 & 32 & 32\\
Soft-KD & 32 & 32 & 32 & 32 & 32 & 32 & 32 & 32\\
Off-MOPD & 24 & 32 & 24 & 16 & 16 & 16 & 24 & 40\\
MOPD & 24 & 16 & 24 & 16 & 48 & 16 & 96 & 40\\
\bottomrule
\end{tabularx}

\end{table}

\subsection{Task Splits and Benchmarks}
\label{app:splits}
\Cref{tab:splits} defines the training, validation, and test splits for each domain; \cref{tab:suites} lists the evaluation benchmarks, metrics, and test-set sizes. We split InterCode-SQL by Spider database \citep{yu2018spider} to prevent queries sharing a schema across splits. For mathematics and instruction following, we use the teachers' held-out RL validation splits after removing prompts in the student's training corpus. Holding out part of the distillation corpus would not ensure that validation is also held out from teacher training: in our check, an instruction-following subset constructed this way lay entirely within the teacher's RL training set and inflated the teacher ceiling by over $40$ points, thereby inflating the recovery denominator in \cref{eq:recovery-rate}.

\begin{table}[H]
\papertablestyle
\caption{\textbf{Task splits.}
Training, validation, and test splits are disjoint and shared by all methods for each student and setup. Validation excludes both teacher and student training data.}
\label{tab:splits}
\footnotesize
\centering
\footnotesize
\setlength{\tabcolsep}{4pt}
\begin{tabularx}{\linewidth}{lYYY}
\toprule
\tableheadrow\tableheading{Domain} & \tableheading{Teacher RL training} & \tableheading{Validation} & \tableheading{Test}\\
\midrule
Mathematics & DAPO-Math-17k & teacher's held-out RL split (512) & MATH-500, GSM8K, AIME24, AIME25\\
\midrule
Instruction following & IF-RLVR multi-constraint & teacher's held-out RL split (430) & IFEval, IFBench\\
\midrule
InterCode-SQL & 124 Spider train DBs & 16 held-out Spider train DBs (815) & Spider dev (1,034)\\
\midrule
ALFWorld & \texttt{train} & \texttt{valid\_train} (200) & \texttt{valid\_seen} (140), \texttt{valid\_unseen} (134)\\
\midrule
WebShop & goals $\geq$1,500 & goals 500--1,499 (1,000) & goals 0--499 (500)\\
\arrayrulecolor{black}\bottomrule
\end{tabularx}

\end{table}

\begin{table}[H]
\centering
\papertablestyle

\footnotesize
\setlength{\tabcolsep}{4pt}
\caption{\textbf{Benchmarks in the unified evaluation.}
Benchmarks, metrics, and test-set sizes for \cref{tab:capability-recovery,app:per-model}. A benchmark is ID if its domain is covered by the setup's teachers and OOD otherwise, and every benchmark counts toward that setup's recovery or retention. Both setups use the same evaluation suite.}\label{tab:suites}
\begin{tabularx}{\linewidth}{lYlccc}
\toprule
 &  &  &  & \multicolumn{2}{c}{\tableheading{Partition}}\\
\cmidrule(lr){5-6}
\tableheadrow\tableheading{Benchmark} & \tableheading{Domain} & \tableheading{Metric} & \tableheading{$n$} & \tableheading{\mia{}} & \tableheading{\agentthree{}}\\

\midrule
IFEval & instruction following & prompt-strict accuracy & 541 & ID & OOD\\
IFBench & instruction following & prompt-strict accuracy & 300 & ID & OOD\\
MATH-500 & mathematics & exact-match accuracy & 500 & ID & OOD\\
GSM8K & mathematics & exact-match accuracy & 1,319 & ID & OOD\\
AIME24 & mathematics & avg@16 accuracy ($T{=}0.6$) & 30 & ID & OOD\\
AIME25 & mathematics & avg@16 accuracy ($T{=}0.6$) & 30 & ID & OOD\\
InterCode-SQL & coding agent & task success rate & 1,034 & ID & ID\\
ALFWorld & household agent & success rate & 274 & OOD & ID\\
WebShop & shopping agent & success rate & 500 & OOD & ID\\
GPQA-Diamond & none & accuracy & 198 & OOD & OOD\\
MMLU-Pro & none & accuracy & 12,032 & OOD & OOD\\
\midrule
\multicolumn{4}{l}{\itshape Benchmarks per partition} & 7\,/\,4 & 3\,/\,8\\
\bottomrule
\end{tabularx}
\end{table}

\section{Why Off-Policy Distillation Can Match On-Policy Distillation}
\label{app:offpolicy-theory}

Motivated by these empirical results, we analyze when off-policy distillation can remain competitive
despite the distribution mismatch between teacher-generated training trajectories and student rollouts.

\subsection{Exposure Bias and Prior Work}
\label{app:offpolicy-background}

Teacher-generated trajectories have long been used to distill multiple RL policies into a single student \citep{rusu2016policy}.
When teacher and student share a base model, SFT can match RL expert performance without a clear advantage from on-policy distillation \citep{zhang2026online}.
Agentic studies similarly report successful single-task transfer, but observe degradation when combining tasks \citep{wang2026twophase}.
Our multi-teacher experiments show that tuned off-policy methods can still recover expert capabilities comparably to MOPD.

However, the concern about exposure bias remains:
off-policy training uses teacher-generated trajectories, but student actions determine the states encountered during generation.
Early errors can shift the later trajectory away from the training distribution.
We therefore analyze whether low token loss on teacher trajectories keeps student and teacher rollout distributions close despite
exposure bias.

\subsection{Distillation Losses and Success Probabilities}
\label{app:offpolicy-result}
Fix a domain $d$ and a prompt. Let $P_T$ and $P_\theta$ denote the teacher and student trajectory distributions.
Assume a shared environment and stopping rule, bounded trajectory lengths, and finite expected losses.
Both policies sample softmax outputs at temperature $1$.
Let $h_k$ precede assistant turn $k$, with at most $N$ turns per trajectory.
Write $Q_T$ and $Q_\theta$ for the teacher and student distributions over complete responses at a given history.
The unfiltered population losses for the forward-KL component of Soft-KD, MOPD, and Offline-MOPD are
\begin{equation}
\begin{aligned}
L_{\mathrm{off}}
&=\E_{\tau\sim P_T}\sum_{t=1}^{|\tau|}\ell_{\mathrm{FKL}}(\theta;d,s_t),\\
L_{\mathrm{on}}
&=\E_{\tau\sim P_\theta}\sum_{t=1}^{|\tau|}\ell_{\mathrm{RKL}}(\theta;d,s_t),\\
L_{\mathrm{hyb}}
&=\E_{\tau\sim P_T}\sum_k\KL\bigl(Q_\theta(\cdot\mid h_k)\,\|\,Q_T(\cdot\mid h_k)\bigr).
\end{aligned}
\label{eq:seq-objectives}
\end{equation}
Token losses are summed without length normalization, using the notation of \cref{eq:mopd}.
The KL chain rule identifies each response-level KL with the expected sum of reverse-KL token losses along a student response.

\begin{proposition}
\label{prop:offpolicy}
The trajectory losses satisfy
\[
L_{\mathrm{off}}=\KL(P_T\,\|\,P_\theta),
\qquad
L_{\mathrm{on}}=\KL(P_\theta\,\|\,P_T).
\]
For every trajectory statistic $F$ taking values in $[0,1]$,
\[
\bigl|\E_{P_T}F-\E_{P_\theta}F\bigr|
\le \operatorname{TV}(P_T,P_\theta)
\le \sqrt{\frac{\min\{L_{\mathrm{off}},\,L_{\mathrm{on}},\,N L_{\mathrm{hyb}}\}}{2}},
\]
where $\operatorname{TV}$ denotes total-variation distance.
At a context $s$, suppose that for some $m\ge0$,
\[
e^{-m}\le\frac{\pi_d(a\mid s)}{\pi_\theta(a\mid s)}\le e^m
\qquad\text{for every token }a.
\]
Then
\[
e^{-m}\ell_{\mathrm{RKL}}(\theta;d,s)
\le\ell_{\mathrm{FKL}}(\theta;d,s)
\le e^m\ell_{\mathrm{RKL}}(\theta;d,s).
\]
\end{proposition}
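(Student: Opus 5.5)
The plan has three parts: the two KL identities via the chain rule, the total-variation bound via Pinsker plus a turn-wise telescoping argument, and the token-level sandwich via an integral representation of the two divergences.

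\emph{Identities.} For the first part I will write the trajectory likelihood as a product of policy factors and environment/stopping factors. Under the assumption of a shared environment and stopping rule, the environment factors are identical for $P_T$ and $P_\theta$. They therefore cancel in $\log(P_T(\tau)/P_\theta(\tau))$, leaving
\[
\sum_t \log\frac{\pi_d(a_t\mid s_t)}{\pi_\theta(a_t\mid s_t)}.
\]
Taking expectation under $P_T$ and conditioning token by token (the tower property, justified by bounded lengths and finite expected losses) turns each summand into $\ell_{\mathrm{FKL}}(\theta;d,s_t)$. This gives $L_{\mathrm{off}}=\KL(P_T\|P_\theta)$. The same computation under $P_\theta$ gives $L_{\mathrm{on}}=\KL(P_\theta\|P_T)$.

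\emph{Total-variation bound.} The first inequality is the variational characterization of TV for $[0,1]$-valued statistics. Pinsker's inequality applied to each identity gives $\operatorname{TV}\le\sqrt{L_{\mathrm{off}}/2}$ and $\operatorname{TV}\le\sqrt{L_{\mathrm{on}}/2}$.

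The hybrid term is the main obstacle. The chain rule under $P_T$ produces the \emph{forward} response-level KL, whereas $L_{\mathrm{hyb}}$ is a reverse KL evaluated at teacher histories, so no exact identity is available. I will instead use a turn-wise telescoping (hybrid-process) argument. Pad every trajectory to exactly $N$ turns with trivial turns. Let $R_j$ be the process that uses the student's response kernel $Q_\theta$ for turns $1,\dots,j$ and the teacher's kernel $Q_T$ afterwards, with the shared environment throughout; then $R_0=P_T$ and $R_N=P_\theta$.

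Consecutive processes $R_{j-1}$ and $R_j$ differ only at turn $j$. Conditioning on $h_j$ and noting that the continuation kernels coincide, I get
\[
\operatorname{TV}(R_{j-1},R_j)\le \E_{h_j}\operatorname{TV}\bigl(Q_T(\cdot\mid h_j),Q_\theta(\cdot\mid h_j)\bigr),
\]
where $h_j$ is distributed as under $R_{j-1}$. To make this distribution the teacher's, I will order the telescoping the other way: let $R_j$ use the teacher for the first $j$ turns, so $R_0=P_\theta$ and $R_N=P_T$. The differing turn is then $j$ with history law equal to the teacher's prefix law, matching the expectation in $L_{\mathrm{hyb}}$. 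Applying Pinsker per term with $\KL(Q_\theta\|Q_T)$ gives
\[
\operatorname{TV}(P_T,P_\theta)\le\sum_{k\le N}\E_{P_T}\sqrt{\KL_k/2},
\]
where $\KL_k=\KL\bigl(Q_\theta(\cdot\mid h_k)\,\|\,Q_T(\cdot\mid h_k)\bigr)$. Jensen's inequality followed by Cauchy--Schwarz over the $N$ terms yields $\sqrt{N L_{\mathrm{hyb}}/2}$. Padded turns contribute zero, since both policies agree on them.

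\emph{Token-level sandwich.} Write $p=\pi_d(\cdot\mid s)$, $q=\pi_\theta(\cdot\mid s)$ and $r_a=p_a/q_a\in[e^{-m},e^m]$. Because $\sum_a(p_a-q_a)=0$, I can add linear terms without changing either divergence:
\[
\ell_{\mathrm{FKL}}=\sum_a q_a\,g(r_a),\qquad \ell_{\mathrm{RKL}}=\sum_a q_a\,h(r_a),
\]
with $g(r)=r\log r-r+1=\int_1^r\log t\,dt$ and $h(r)=r-1-\log r=\int_1^r(1-1/t)\,dt$. The ratio of the integrands is $t\log t/(t-1)$. Using $1-1/t\le\log t\le t-1$, this ratio lies in $[1,t]$ for $t\ge1$ and in $[t,1]$ for $t<1$; in both cases it lies in $[e^{-m},e^m]$ along the path of integration. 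The two integrands have the same sign on each side of $1$, so the integrals satisfy $e^{-m}h(r)\le g(r)\le e^m h(r)$ pointwise. Summing with the nonnegative weights $q_a$ gives the claimed inequality between $\ell_{\mathrm{FKL}}$ and $\ell_{\mathrm{RKL}}$.
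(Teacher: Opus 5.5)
Your proposal is correct and follows essentially the same route as the paper. It uses the chain-rule identities with the environment factors cancelling, Pinsker for $L_{\mathrm{off}}$ and $L_{\mathrm{on}}$, and for $L_{\mathrm{hyb}}$ the ordering you finally settle on (teacher prefix, student continuation), which is exactly the paper's ``replace teacher responses from last to first'' telescoping, followed by per-turn Pinsker and Cauchy--Schwarz. The only difference is cosmetic: in the token-level sandwich you bound the ratio of first derivatives, $t\log t/(t-1)\in[e^{-m},e^m]$, and integrate once, whereas the paper uses the exact relation $f''(u)=u\,g''(u)$ and integrates twice, and both yield the same pointwise bound.
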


\noindent\textit{Proof.}
The KL chain rule gives the trajectory identities: the shared environment contributes zero conditional KL, leaving only assistant-token losses.
For a statistic in $[0,1]$, its expectation is the integral of its upper-tail probabilities.
The expectation gap is therefore bounded by total variation.
Pinsker's inequality then gives the bounds involving $L_{\mathrm{off}}$ and $L_{\mathrm{on}}$ \citep{tsybakov2009introduction}.

For $L_{\mathrm{hyb}}$, let $\delta_k$ and $D_k$ be the total variation and reverse KL between the student and teacher response distributions at $h_k$, setting both to zero after termination.
Replacing teacher responses with student responses from last to first preserves teacher histories before each replacement and a common student continuation afterward.
The triangle inequality, Pinsker's inequality, and Cauchy--Schwarz give
\[
\operatorname{TV}(P_T,P_\theta)
\le\E_{P_T}\sum_k\delta_k
\le\E_{P_T}\sum_k\sqrt{D_k/2}
\le\sqrt{N L_{\mathrm{hyb}}/2}.
\]

For the token-loss comparison, let $r$ be the teacher-to-student probability ratio and consider
\[
\begin{gathered}
f(u)=u\log u-u+1,
\qquad
g(u)=u-1-\log u,\\
f''(u)=u\,g''(u).
\end{gathered}
\]
Since $r$ has mean one under the student, the forward and reverse KL losses are the student expectations of $f(r)$ and $g(r)$, respectively.
Both functions and their first derivatives vanish at $1$.
Integrating the second-derivative bounds twice from $1$ yields
\[
e^{-m}g(u)\le f(u)\le e^m g(u),
\qquad u\in[e^{-m},e^m].
\]
Taking student expectations completes the proof.\hfill$\square$

Choosing $F$ as the task-success indicator bounds the success gap; the bounds using $L_{\mathrm{off}}$ and $L_{\mathrm{on}}$ introduce no trajectory-length factor beyond the summed losses.
The expected summed SFT loss under $P_T$ equals $L_{\mathrm{off}}$ plus a teacher-dependent entropy term.

For prompts with positive teacher success probability, rejection sampling conditions the teacher distribution on success (\cref{app:corpus}).
The expected summed SFT loss then equals forward trajectory KL from this conditional distribution to the student, up to a parameter-independent constant.
Soft-KD and Offline-MOPD retain the original teacher targets on successful histories, so the unfiltered bounds do not directly apply to their filtered losses.
In self-distillation, each teacher is realizable by the student, so all three losses can vanish on an individual domain.
A simultaneous fit is not guaranteed; empirically, Mix-RL recovers most teachers' gains in one student (\cref{sec:merge}).

\section{Per-Model Capability Recovery}
\label{app:per-model}
This appendix breaks down \cref{tab:aggregate-models,tab:scale} by benchmark, in the format of \cref{tab:capability-recovery}, for the other three students (\cref{tab:recovery-qwen3-1p7b,tab:recovery-ministral-3b,tab:recovery-qwen35-4b-rerun}) and for \textsc{Qwen3-1.7B} with 8B teachers (\cref{tab:recovery-qwen3-1p7b-8bteacher}).
Under self-distillation, neither in-domain recovery nor out-of-domain retention consistently favors on-policy distillation: for every student, the four distillation methods recover within 3.5 percentage points of one another on \mia{} and 4.6 on \agentthree{}, and MOPD improves mean OOD retention over SFT by 1.1 benchmark points (\cref{sec:main}).
With 8B teachers the methods separate: SFT and Soft-KD recover more in domain than both MOPD variants but lose more out of domain, most sharply on \agentthree{} (\cref{sec:teacher-scale}).

\begin{table}[H]
\papertablestyle
\caption{\textbf{Per-benchmark results for \textsc{Qwen3-1.7B}.}
Notation follows \cref{tab:capability-recovery}.
ID Rec.\ and OOD~$\Delta$ pool ALFWorld's seen (ALF-s) and unseen (ALF-u) splits. }
\label{tab:recovery-qwen3-1p7b}
\footnotesize
\centering
\begingroup
\setlength{\parskip}{0pt}
\fontsize{6.2}{7.4}\selectfont
\setlength{\tabcolsep}{0.8pt}
\begin{tabularx}{\linewidth}{l*{12}{>{\hsize=0.99\hsize\linewidth=\hsize}R}>{\hsize=1.12\hsize\linewidth=\hsize}R R}
\toprule
 & \multicolumn{8}{c}{\textbf{Single-turn}} & \multicolumn{4}{c}{\textbf{Multi-turn}} & \multicolumn{2}{c}{\textbf{Overall}}\\
\cmidrule(lr){2-9}\cmidrule(lr){10-13}\cmidrule(l){14-15}
\tableheadrow\textbf{\mia{}} & \recoveryID{MATH} & \recoveryID{GSM} & \recoveryID{AIME24} & \recoveryID{AIME25} & \recoveryID{IFEval} & \recoveryID{IFBen} & \recoveryOOD{GPQA} & \recoveryOOD{MMLU} & \recoveryID{IC-SQL} & \recoveryOOD{ALF-s} & \recoveryOOD{ALF-u} & \recoveryOOD{WShop} & \colhead{ID Rec.} & \colhead{OOD $\Delta$}\\
\midrule
Base & \reprosim{73.4}{0.9} & \reprosim{82.7}{0.1} & \reprosim{12.9}{0.4} & \reprosim{11.2}{0.8} & \reprosim{66.7}{0.1} & \reprosim{18.3}{0.3} & \reprosim{30.8}{0.5} & \reprosim{47.0}{0.1} & \reprosim{12.4}{0.1} & \reprosim{9.3}{0.7} & \reprosim{9.0}{0.7} & \reprosim{6.6}{0.2} & \reproref{\na} & \reproref{0.0}\\
Teachers & \reprosim{79.6}{0.4} & \reprosim{85.7}{0.2} & \reprosim{21.5}{1.4} & \reprosim{19.4}{1.1} & \reprosim{83.4}{0.1} & \reprosim{42.3}{0.3} & \reprosim{31.3}{0.5} & \reprosim{46.8}{0.1} & \reprosim{61.6}{0.1} & \reprosim{7.1}{0.7} & \reprosim{6.7}{0.7} & \reprosim{4.4}{0.2} & \reproref{100.0} & \reproref{\na}\\
\midrule[.3pt]
Mix-RL & \reprosim{79.7}{0.3} & \reprosim{84.3}{0.6} & \reprosim{20.0}{1.9} & \reprosim{20.4}{1.9} & \reprosim{83.5}{0.4} & \reprosim{41.6}{1.4} & \reprosim{29.3}{0.9} & \reprosim{40.1}{0.4} & \reprosim{55.1}{0.7} & \reprosim{5.7}{1.4} & \reprosim{1.2}{1.6} & \reprosim{1.8}{0.5} & \reprosim{92.3}{1.3} & \reprosim{$-$4.7}{0.3}\\
Merge & \reprosim{78.3}{0.3} & \reprosim{83.3}{0.2} & \reprosim{19.9}{0.6} & \reprosim{16.5}{0.8} & \reprosim{84.2}{0.1} & \reprosim{41.3}{0.3} & \reprosim{29.3}{0.5} & \reprosim{40.7}{0.1} & \reprosim{54.0}{0.1} & \reprosim{7.1}{0.7} & \reprosim{3.7}{0.7} & \reproref{0.0} & \reprosim{86.2}{0.2} & \reprosim{$-$4.5}{0.2}\\
\midrule[.3pt]
SFT & \reprosim{78.9}{0.6} & \reprosim{85.2}{0.5} & \reprosim{18.6}{1.3} & \reprosim{17.8}{2.0} & \reprosim{\best{84.5}}{0.2} & \reprosim{41.2}{0.7} & \reprosim{26.3}{0.9} & \reprosim{\best{41.1}}{0.2} & \reprosim{\best{61.2}}{0.4} & \reprosim{1.2}{1.1} & \reprosim{0.7}{0.7} & \reproref{0.0} & \reprosim{94.7}{2.9} & \reprosim{$-$6.3}{0.3}\\
Soft-KD & \reprosim{79.6}{0.5} & \reprosim{\best{85.5}}{0.2} & \reprosim{\best{20.6}}{1.0} & \reprosim{20.1}{0.7} & \reprosim{83.7}{1.1} & \reprosim{41.3}{0.6} & \reprosim{26.4}{0.8} & \reprosim{36.4}{0.6} & \reprosim{60.6}{0.3} & \reproref{0.0} & \reproref{0.0} & \reproref{0.0} & \reprosim{\best{98.2}}{2.0} & \reprosim{$-$7.7}{0.2}\\
Off-MOPD & \reprosim{\best{82.0}}{0.7} & \reprosim{83.9}{0.5} & \reprosim{19.6}{1.8} & \reprosim{19.3}{1.0} & \reprosim{82.6}{0.7} & \reprosim{\best{42.6}}{1.8} & \reprosim{\best{26.9}}{1.1} & \reprosim{36.1}{0.9} & \reprosim{60.8}{1.4} & \reprosim{0.7}{0.7} & \reproref{0.0} & \reprosim{2.7}{0.6} & \reprosim{97.7}{2.3} & \reprosim{$-$6.9}{0.5}\\
MOPD & \reprosim{79.3}{1.0} & \reprosim{84.0}{0.7} & \reprosim{19.4}{0.6} & \reprosim{\best{21.5}}{1.2} & \reprosim{83.9}{0.8} & \reprosim{41.8}{0.8} & \reprosim{25.6}{1.3} & \reprosim{36.8}{0.8} & \reprosim{60.0}{0.4} & \reprosim{\best{4.3}}{2.1} & \reprosim{\best{3.0}}{0.7} & \reprosim{\best{3.2}}{1.0} & \reprosim{96.9}{1.8} & \reprosim{\best{$-$6.1}}{0.8}\\
\addlinespace[3pt]
\midrule
\tableheadrow\textbf{\boldmath\agentthree{}} & \recoveryOOD{MATH} & \recoveryOOD{GSM} & \recoveryOOD{AIME24} & \recoveryOOD{AIME25} & \recoveryOOD{IFEval} & \recoveryOOD{IFBen} & \recoveryOOD{GPQA} & \recoveryOOD{MMLU} & \recoveryID{IC-SQL} & \recoveryID{ALF-s} & \recoveryID{ALF-u} & \recoveryID{WShop} & \colhead{ID Rec.} & \colhead{OOD $\Delta$}\\
\midrule
Base & \reprosim{73.4}{0.9} & \reprosim{82.7}{0.1} & \reprosim{12.9}{0.4} & \reprosim{11.2}{0.8} & \reprosim{66.7}{0.1} & \reprosim{18.3}{0.3} & \reprosim{30.8}{0.5} & \reprosim{47.0}{0.1} & \reprosim{12.4}{0.1} & \reprosim{9.3}{0.7} & \reprosim{9.0}{0.7} & \reprosim{6.6}{0.2} & \reproref{\na} & \reproref{0.0}\\
Teachers & \reprosim{75.6}{0.4} & \reprosim{83.2}{0.2} & \reprosim{15.0}{0.4} & \reprosim{11.0}{0.8} & \reprosim{67.5}{0.1} & \reprosim{19.7}{0.3} & \reprosim{31.3}{0.5} & \reprosim{47.3}{0.1} & \reprosim{61.6}{0.1} & \reprosim{88.6}{0.7} & \reprosim{85.1}{0.7} & \reprosim{65.8}{0.5} & \reproref{100.0} & \reproref{\na}\\
\midrule[.3pt]
Mix-RL & \reprosim{71.2}{0.2} & \reprosim{83.4}{0.3} & \reprosim{10.5}{1.3} & \reprosim{10.3}{0.6} & \reprosim{67.5}{0.4} & \reprosim{16.1}{0.5} & \reprosim{30.8}{0.9} & \reprosim{43.6}{0.2} & \reprosim{62.0}{2.0} & \reprosim{90.0}{1.4} & \reprosim{88.1}{2.0} & \reprosim{66.8}{0.9} & \reprosim{101.9}{0.8} & \reprosim{$-$1.2}{0.1}\\
Merge & \reprosim{73.8}{1.1} & \reprosim{81.4}{0.1} & \reprosim{11.2}{2.0} & \reprosim{13.1}{1.6} & \reprosim{63.9}{0.2} & \reprosim{19.1}{0.4} & \reprosim{28.3}{0.5} & \reprosim{45.9}{0.1} & \reprosim{53.6}{0.1} & \reprosim{82.9}{0.7} & \reprosim{80.6}{0.7} & \reprosim{57.2}{0.2} & \reprosim{88.3}{0.3} & \reprosim{$-$0.8}{0.2}\\
\midrule[.3pt]
SFT & \reprosim{68.9}{1.5} & \reprosim{77.2}{0.5} & \reprosim{9.7}{3.3} & \reprosim{8.6}{2.2} & \reprosim{55.3}{0.8} & \reprosim{14.9}{0.4} & \reprosim{24.4}{0.6} & \reprosim{40.9}{0.3} & \reprosim{\best{60.9}}{0.6} & \reprosim{87.4}{1.1} & \reprosim{84.3}{0.7} & \reprosim{\best{64.4}}{1.2} & \reprosim{\best{98.3}}{0.7} & \reprosim{$-$5.4}{0.9}\\
Soft-KD & \reprosim{\best{70.8}}{0.4} & \reprosim{80.3}{0.6} & \reprosim{9.6}{1.3} & \reprosim{10.8}{0.6} & \reprosim{56.5}{1.2} & \reprosim{16.4}{0.8} & \reprosim{22.6}{1.1} & \reprosim{43.1}{0.3} & \reprosim{60.5}{0.2} & \reprosim{85.7}{2.1} & \reprosim{82.8}{1.5} & \reprosim{63.6}{1.8} & \reprosim{96.8}{1.7} & \reprosim{$-$4.1}{0.5}\\
Off-MOPD & \reprosim{67.7}{0.9} & \reprosim{\best{82.5}}{0.1} & \reprosim{\best{12.0}}{1.3} & \reprosim{\best{11.8}}{1.5} & \reprosim{63.3}{0.9} & \reprosim{\best{16.7}}{0.6} & \reprosim{24.4}{0.8} & \reprosim{43.8}{0.2} & \reprosim{59.0}{1.6} & \reprosim{86.2}{1.1} & \reprosim{82.1}{2.2} & \reprosim{64.2}{1.7} & \reprosim{96.3}{2.1} & \reprosim{$-$2.6}{0.2}\\
MOPD & \reprosim{70.2}{0.5} & \reprosim{82.2}{0.5} & \reprosim{11.4}{1.9} & \reprosim{11.0}{1.2} & \reprosim{\best{63.9}}{0.9} & \reprosim{16.4}{0.7} & \reprosim{\best{26.1}}{1.1} & \reprosim{\best{45.2}}{0.2} & \reprosim{57.1}{1.8} & \reprosim{\best{88.6}}{1.2} & \reprosim{\best{85.8}}{2.0} & \reprosim{64.0}{0.9} & \reprosim{96.8}{0.6} & \reprosim{\best{$-$2.1}}{0.5}\\
\bottomrule
\end{tabularx}
\endgroup
\end{table}

\begin{table}[H]
\papertablestyle
\caption{\textbf{Per-benchmark results for \textsc{Ministral-3-3B}.}
Notation follows \cref{tab:capability-recovery}.
ID Rec.\ and OOD~$\Delta$ pool ALFWorld's seen (ALF-s) and unseen (ALF-u) splits. }
\label{tab:recovery-ministral-3b}
\footnotesize
\centering
\begingroup
\setlength{\parskip}{0pt}
\fontsize{6.2}{7.4}\selectfont
\setlength{\tabcolsep}{0.8pt}
\begin{tabularx}{\linewidth}{l*{12}{>{\hsize=0.99\hsize\linewidth=\hsize}R}>{\hsize=1.12\hsize\linewidth=\hsize}R R}
\toprule
 & \multicolumn{8}{c}{\textbf{Single-turn}} & \multicolumn{4}{c}{\textbf{Multi-turn}} & \multicolumn{2}{c}{\textbf{Overall}}\\
\cmidrule(lr){2-9}\cmidrule(lr){10-13}\cmidrule(l){14-15}
\tableheadrow\textbf{\mia{}} & \recoveryID{MATH} & \recoveryID{GSM} & \recoveryID{AIME24} & \recoveryID{AIME25} & \recoveryID{IFEval} & \recoveryID{IFBen} & \recoveryOOD{GPQA} & \recoveryOOD{MMLU} & \recoveryID{IC-SQL} & \recoveryOOD{ALF-s} & \recoveryOOD{ALF-u} & \recoveryOOD{WShop} & \colhead{ID Rec.} & \colhead{OOD $\Delta$}\\
\midrule
Base & \reprosim{85.4}{0.3} & \reprosim{91.5}{0.0} & \reprosim{23.5}{0.3} & \reprosim{21.9}{0.4} & \reprosim{56.0}{0.2} & \reprosim{17.7}{0.3} & \reprosim{41.4}{0.5} & \reprosim{60.1}{0.0} & \reprosim{40.4}{0.1} & \reprosim{6.4}{0.7} & \reprosim{9.0}{0.7} & \reprosim{0.4}{0.2} & \reproref{\na} & \reproref{0.0}\\
Teachers & \reprosim{86.6}{0.2} & \reprosim{90.4}{0.2} & \reprosim{26.2}{0.3} & \reprosim{23.1}{0.7} & \reprosim{69.1}{0.1} & \reprosim{24.3}{0.3} & \reprosim{39.9}{0.5} & \reprosim{59.0}{0.0} & \reprosim{74.1}{0.1} & \reprosim{23.6}{1.2} & \reprosim{20.1}{0.7} & \reprosim{0.4}{0.2} & \reproref{100.0} & \reproref{\na}\\
\midrule[.3pt]
Mix-RL & \reprosim{82.6}{0.9} & \reprosim{89.6}{0.5} & \reprosim{24.3}{1.4} & \reprosim{22.5}{1.0} & \reprosim{70.5}{0.4} & \reprosim{26.0}{1.2} & \reprosim{39.2}{0.6} & \reprosim{46.2}{0.9} & \reprosim{74.4}{0.3} & \reprosim{15.0}{2.6} & \reprosim{13.4}{2.0} & \reprosim{1.2}{0.3} & \reprosim{93.2}{4.8} & \reprosim{$-$2.2}{0.6}\\
Merge & \reprosim{82.1}{0.3} & \reprosim{89.6}{0.1} & \reprosim{21.6}{0.6} & \reprosim{21.0}{0.4} & \reprosim{75.7}{0.2} & \reprosim{25.8}{0.4} & \reprosim{36.2}{0.6} & \reprosim{48.3}{0.0} & \reprosim{77.8}{0.1} & \reprosim{10.5}{0.8} & \reprosim{16.2}{0.9} & \reprosim{0.9}{0.2} & \reprosim{99.7}{1.7} & \reprosim{$-$2.7}{0.1}\\
\midrule[.3pt]
SFT & \reprosim{83.0}{0.4} & \reprosim{88.5}{0.3} & \reprosim{24.7}{0.5} & \reprosim{\best{24.0}}{0.8} & \reprosim{69.9}{0.3} & \reprosim{26.1}{0.4} & \reprosim{\best{38.4}}{1.0} & \reprosim{49.5}{1.1} & \reprosim{75.0}{0.7} & \reprosim{9.5}{0.8} & \reprosim{8.7}{0.9} & \reprosim{0.2}{0.2} & \reprosim{95.5}{1.1} & \reprosim{$-$3.1}{0.2}\\
Soft-KD & \reprosim{83.5}{0.4} & \reprosim{89.4}{0.6} & \reprosim{24.9}{0.5} & \reprosim{23.1}{1.0} & \reprosim{71.6}{0.6} & \reprosim{25.3}{0.9} & \reprosim{37.9}{1.0} & \reprosim{\best{52.8}}{0.6} & \reprosim{75.1}{0.3} & \reprosim{12.6}{2.5} & \reprosim{10.9}{1.1} & \reprosim{\best{0.7}}{0.4} & \reprosim{\best{98.4}}{1.3} & \reprosim{$-$1.6}{0.5}\\
Off-MOPD & \reprosim{\best{84.0}}{0.2} & \reprosim{\best{89.7}}{0.2} & \reprosim{\best{25.0}}{2.7} & \reprosim{21.3}{1.4} & \reprosim{70.4}{0.8} & \reprosim{\best{26.7}}{0.9} & \reprosim{37.9}{0.5} & \reprosim{49.3}{0.5} & \reprosim{73.8}{1.7} & \reprosim{\best{20.0}}{0.7} & \reprosim{10.4}{1.3} & \reprosim{0.2}{0.3} & \reprosim{94.9}{4.4} & \reprosim{$-$1.7}{0.3}\\
MOPD & \reprosim{83.1}{0.3} & \reprosim{88.7}{0.3} & \reprosim{23.5}{1.3} & \reprosim{21.1}{1.2} & \reprosim{\best{73.0}}{0.8} & \reprosim{26.6}{1.3} & \reprosim{38.2}{2.5} & \reprosim{51.9}{0.6} & \reprosim{\best{76.1}}{1.2} & \reprosim{14.8}{1.5} & \reprosim{\best{13.2}}{1.1} & \reprosim{\best{0.7}}{0.2} & \reprosim{97.0}{2.6} & \reprosim{\best{$-$1.2}}{0.6}\\
\addlinespace[3pt]
\midrule
\tableheadrow\textbf{\boldmath\agentthree{}} & \recoveryOOD{MATH} & \recoveryOOD{GSM} & \recoveryOOD{AIME24} & \recoveryOOD{AIME25} & \recoveryOOD{IFEval} & \recoveryOOD{IFBen} & \recoveryOOD{GPQA} & \recoveryOOD{MMLU} & \recoveryID{IC-SQL} & \recoveryID{ALF-s} & \recoveryID{ALF-u} & \recoveryID{WShop} & \colhead{ID Rec.} & \colhead{OOD $\Delta$}\\
\midrule
Base & \reprosim{85.4}{0.3} & \reprosim{91.5}{0.0} & \reprosim{23.5}{0.3} & \reprosim{21.9}{0.4} & \reprosim{56.0}{0.2} & \reprosim{17.7}{0.3} & \reprosim{41.4}{0.5} & \reprosim{60.1}{0.0} & \reprosim{40.4}{0.1} & \reprosim{6.4}{0.7} & \reprosim{9.0}{0.7} & \reprosim{0.4}{0.2} & \reproref{\na} & \reproref{0.0}\\
Teachers & \reprosim{79.0}{0.3} & \reprosim{89.2}{0.0} & \reprosim{20.2}{0.8} & \reprosim{16.5}{0.4} & \reprosim{51.8}{0.2} & \reprosim{15.0}{0.3} & \reprosim{36.4}{0.5} & \reprosim{54.0}{0.0} & \reprosim{73.7}{0.1} & \reprosim{77.9}{0.7} & \reprosim{75.4}{0.7} & \reprosim{68.8}{0.2} & \reproref{100.0} & \reproref{\na}\\
\midrule[.3pt]
Mix-RL & \reprosim{79.9}{0.2} & \reprosim{91.4}{0.3} & \reprosim{19.5}{0.9} & \reprosim{14.9}{1.1} & \reprosim{53.4}{1.0} & \reprosim{16.2}{0.7} & \reprosim{33.2}{2.1} & \reprosim{54.6}{0.2} & \reprosim{66.3}{1.1} & \reprosim{77.1}{1.2} & \reprosim{74.6}{1.5} & \reprosim{64.6}{1.1} & \reprosim{92.7}{1.3} & \reprosim{$-$4.3}{0.5}\\
Merge & \reprosim{70.6}{0.5} & \reprosim{88.3}{0.1} & \reprosim{18.7}{0.4} & \reprosim{16.6}{0.7} & \reprosim{48.6}{0.2} & \reprosim{21.7}{0.3} & \reprosim{35.5}{0.6} & \reprosim{46.6}{0.0} & \reprosim{70.4}{0.1} & \reprosim{60.0}{0.7} & \reprosim{54.5}{0.7} & \reprosim{53.2}{0.5} & \reprosim{77.6}{0.4} & \reprosim{$-$6.4}{0.2}\\
\midrule[.3pt]
SFT & \reprosim{80.6}{0.5} & \reprosim{89.3}{0.3} & \reprosim{20.1}{0.8} & \reprosim{19.0}{1.9} & \reprosim{58.0}{0.3} & \reprosim{19.2}{0.4} & \reprosim{36.4}{0.5} & \reprosim{55.0}{0.2} & \reprosim{65.4}{0.3} & \reprosim{83.6}{0.7} & \reprosim{79.9}{0.7} & \reprosim{63.6}{0.2} & \reprosim{95.1}{0.3} & \reprosim{\best{$-$2.5}}{0.5}\\
Soft-KD & \reprosim{77.4}{1.4} & \reprosim{87.3}{0.8} & \reprosim{18.2}{2.1} & \reprosim{\best{19.5}}{1.3} & \reprosim{\best{59.8}}{1.0} & \reprosim{\best{22.3}}{1.2} & \reprosim{33.8}{1.8} & \reprosim{53.5}{0.3} & \reprosim{63.3}{0.1} & \reprosim{\best{84.3}}{1.9} & \reprosim{\best{80.6}}{2.0} & \reprosim{62.0}{0.8} & \reprosim{93.3}{1.2} & \reprosim{$-$3.2}{0.7}\\
Off-MOPD & \reprosim{79.9}{1.4} & \reprosim{\best{89.9}}{0.5} & \reprosim{\best{20.3}}{1.3} & \reprosim{13.2}{0.8} & \reprosim{54.0}{0.2} & \reprosim{14.3}{0.9} & \reprosim{32.7}{1.1} & \reprosim{54.9}{1.1} & \reprosim{\best{72.7}}{1.0} & \reprosim{77.1}{2.1} & \reprosim{68.7}{2.7} & \reprosim{65.6}{0.6} & \reprosim{95.4}{1.3} & \reprosim{$-$4.8}{0.7}\\
MOPD & \reprosim{\best{80.7}}{0.2} & \reprosim{89.7}{0.3} & \reprosim{20.1}{1.0} & \reprosim{16.5}{0.8} & \reprosim{53.9}{0.7} & \reprosim{16.8}{0.4} & \reprosim{\best{36.7}}{1.1} & \reprosim{\best{55.4}}{0.3} & \reprosim{68.8}{1.1} & \reprosim{81.4}{1.2} & \reprosim{77.6}{0.7} & \reprosim{\best{67.2}}{1.7} & \reprosim{\best{97.9}}{1.9} & \reprosim{$-$3.5}{0.1}\\
\bottomrule
\end{tabularx}
\endgroup
\end{table}

\begin{table}[H]
\papertablestyle
\caption{\textbf{Per-benchmark results for \mainmodel{}.}
Notation follows \cref{tab:capability-recovery}.
ID Rec.\ and OOD~$\Delta$ pool ALFWorld's seen (ALF-s) and unseen (ALF-u) splits. }
\label{tab:recovery-qwen35-4b-rerun}
\footnotesize
\centering
\begingroup
\setlength{\parskip}{0pt}
\fontsize{6.2}{7.4}\selectfont
\setlength{\tabcolsep}{0.8pt}
\begin{tabularx}{\linewidth}{l*{12}{>{\hsize=0.99\hsize\linewidth=\hsize}R}>{\hsize=1.12\hsize\linewidth=\hsize}R R}
\toprule
 & \multicolumn{8}{c}{\textbf{Single-turn}} & \multicolumn{4}{c}{\textbf{Multi-turn}} & \multicolumn{2}{c}{\textbf{Overall}}\\
\cmidrule(lr){2-9}\cmidrule(lr){10-13}\cmidrule(l){14-15}
\tableheadrow\textbf{\mia{}} & \recoveryID{MATH} & \recoveryID{GSM} & \recoveryID{AIME24} & \recoveryID{AIME25} & \recoveryID{IFEval} & \recoveryID{IFBen} & \recoveryOOD{GPQA} & \recoveryOOD{MMLU} & \recoveryID{IC-SQL} & \recoveryOOD{ALF-s} & \recoveryOOD{ALF-u} & \recoveryOOD{WShop} & \colhead{ID Rec.} & \colhead{OOD $\Delta$}\\
\midrule
Base & \reprosim{88.2}{0.2} & \reprosim{92.9}{0.1} & \reprosim{68.1}{0.8} & \reprosim{58.8}{1.5} & \reprosim{65.2}{0.1} & \reprosim{20.7}{0.3} & \reprosim{49.0}{0.5} & \reprosim{69.5}{0.0} & \reprosim{61.5}{0.1} & \reprosim{20.7}{0.7} & \reprosim{14.2}{0.7} & \reprosim{1.6}{0.2} & \reproref{\na} & \reproref{0.0}\\
Teachers & \reprosim{92.4}{0.2} & \reprosim{94.4}{0.1} & \reprosim{68.5}{0.9} & \reprosim{65.0}{1.4} & \reprosim{80.6}{0.2} & \reprosim{39.3}{0.3} & \reprosim{60.6}{0.5} & \reprosim{74.5}{0.0} & \reprosim{77.5}{0.1} & \reprosim{21.4}{0.7} & \reprosim{20.1}{0.7} & \reprosim{2.4}{0.2} & \reproref{100.0} & \reproref{\na}\\
\midrule[.3pt]
Mix-RL & \reprosim{92.6}{0.4} & \reprosim{97.2}{0.2} & \reprosim{68.9}{1.2} & \reprosim{64.2}{1.5} & \reprosim{80.5}{0.9} & \reprosim{39.9}{1.2} & \reprosim{55.1}{1.7} & \reprosim{70.8}{0.1} & \reprosim{76.6}{1.1} & \reprosim{17.6}{2.5} & \reprosim{20.6}{1.9} & \reprosim{0.3}{0.3} & \reprosim{103.5}{4.1} & \reprosim{$+$1.9}{0.6}\\
Merge & \reprosim{94.4}{0.2} & \reprosim{94.6}{0.1} & \reprosim{68.2}{1.3} & \reprosim{65.7}{1.5} & \reprosim{82.8}{0.2} & \reprosim{35.0}{0.3} & \reprosim{55.2}{0.6} & \reprosim{70.3}{0.0} & \reprosim{78.1}{0.1} & \reprosim{14.3}{0.7} & \reprosim{8.2}{0.7} & \reprosim{0.4}{0.2} & \reprosim{101.8}{2.5} & \reprosim{$-$0.1}{0.2}\\
\midrule[.3pt]
SFT & \reprosim{\best{93.9}}{0.3} & \reprosim{93.7}{0.2} & \reprosim{65.2}{2.2} & \reprosim{64.4}{1.3} & \reprosim{80.9}{0.6} & \reprosim{38.1}{1.0} & \reprosim{52.9}{1.1} & \reprosim{\best{67.6}}{0.2} & \reprosim{\best{80.6}}{0.4} & \reprosim{12.6}{1.5} & \reprosim{\best{14.7}}{1.1} & \reprosim{\best{0.7}}{0.3} & \reprosim{98.6}{3.0} & \reprosim{$-$0.7}{0.5}\\
Soft-KD & \reprosim{92.2}{0.5} & \reprosim{94.3}{0.1} & \reprosim{67.1}{1.1} & \reprosim{65.4}{1.7} & \reprosim{\best{81.3}}{0.4} & \reprosim{39.2}{1.3} & \reprosim{\best{53.4}}{1.3} & \reprosim{67.2}{0.2} & \reprosim{76.5}{0.1} & \reprosim{\best{16.0}}{1.8} & \reprosim{13.9}{1.6} & \reprosim{0.1}{0.2} & \reprosim{97.3}{3.4} & \reprosim{\best{$-$0.5}}{0.6}\\
Off-MOPD & \reprosim{92.5}{0.4} & \reprosim{94.2}{0.2} & \reprosim{66.7}{1.0} & \reprosim{64.7}{1.5} & \reprosim{81.0}{0.6} & \reprosim{\best{39.8}}{0.7} & \reprosim{52.0}{1.0} & \reprosim{65.0}{0.3} & \reprosim{78.0}{0.5} & \reprosim{5.5}{1.1} & \reprosim{12.7}{2.2} & \reprosim{0.1}{0.1} & \reprosim{\best{98.7}}{3.2} & \reprosim{$-$2.9}{0.6}\\
MOPD & \reprosim{93.1}{0.2} & \reprosim{\best{94.7}}{0.3} & \reprosim{\best{67.3}}{1.5} & \reprosim{\best{65.8}}{2.5} & \reprosim{80.3}{0.6} & \reprosim{38.8}{1.5} & \reprosim{51.3}{0.8} & \reprosim{67.0}{0.1} & \reprosim{76.3}{1.1} & \reprosim{9.5}{1.8} & \reprosim{10.2}{1.1} & \reprosim{0.3}{0.2} & \reprosim{97.8}{4.8} & \reprosim{$-$2.3}{0.4}\\
\addlinespace[3pt]
\midrule
\tableheadrow\textbf{\boldmath\agentthree{}} & \recoveryOOD{MATH} & \recoveryOOD{GSM} & \recoveryOOD{AIME24} & \recoveryOOD{AIME25} & \recoveryOOD{IFEval} & \recoveryOOD{IFBen} & \recoveryOOD{GPQA} & \recoveryOOD{MMLU} & \recoveryID{IC-SQL} & \recoveryID{ALF-s} & \recoveryID{ALF-u} & \recoveryID{WShop} & \colhead{ID Rec.} & \colhead{OOD $\Delta$}\\
\midrule
Base & \reprosim{88.2}{0.2} & \reprosim{92.9}{0.1} & \reprosim{68.1}{0.8} & \reprosim{58.8}{1.5} & \reprosim{65.2}{0.1} & \reprosim{20.7}{0.3} & \reprosim{49.0}{0.5} & \reprosim{69.5}{0.0} & \reprosim{61.5}{0.1} & \reprosim{20.7}{0.7} & \reprosim{14.2}{0.7} & \reprosim{1.6}{0.2} & \reproref{\na} & \reproref{0.0}\\
Teachers & \reprosim{88.2}{0.2} & \reprosim{93.6}{0.0} & \reprosim{69.4}{1.9} & \reprosim{60.2}{1.5} & \reprosim{67.3}{0.1} & \reprosim{21.3}{0.3} & \reprosim{56.6}{0.5} & \reprosim{70.8}{0.0} & \reprosim{77.5}{0.1} & \reprosim{85.7}{0.7} & \reprosim{79.1}{0.7} & \reprosim{79.6}{0.4} & \reproref{100.0} & \reproref{\na}\\
\midrule[.3pt]
Mix-RL & \reprosim{86.6}{0.3} & \reprosim{95.3}{0.3} & \reprosim{67.4}{1.0} & \reprosim{59.8}{1.6} & \reprosim{64.4}{1.3} & \reprosim{18.9}{1.1} & \reprosim{54.9}{0.8} & \reprosim{67.6}{0.2} & \reprosim{79.0}{0.5} & \reprosim{87.1}{1.2} & \reprosim{81.3}{2.2} & \reprosim{80.4}{1.1} & \reprosim{102.6}{1.2} & \reprosim{$+$0.3}{0.5}\\
Merge & \reprosim{83.5}{0.2} & \reprosim{91.1}{0.1} & \reprosim{46.5}{0.7} & \reprosim{37.8}{1.4} & \reprosim{63.4}{0.2} & \reprosim{18.1}{0.4} & \reprosim{52.0}{0.5} & \reprosim{69.4}{0.1} & \reprosim{73.5}{0.1} & \reprosim{77.1}{0.7} & \reprosim{70.1}{0.7} & \reprosim{71.4}{0.5} & \reprosim{86.8}{0.5} & \reprosim{$-$6.3}{0.2}\\
\midrule[.3pt]
SFT & \reprosim{81.9}{1.0} & \reprosim{\best{92.6}}{0.1} & \reprosim{48.7}{1.0} & \reprosim{38.8}{1.3} & \reprosim{\best{65.3}}{0.5} & \reprosim{18.6}{0.7} & \reprosim{\best{52.0}}{1.5} & \reprosim{67.1}{0.2} & \reprosim{76.9}{0.4} & \reprosim{84.3}{1.2} & \reprosim{\best{79.1}}{1.5} & \reprosim{75.2}{0.7} & \reprosim{96.4}{1.1} & \reprosim{$-$5.9}{0.5}\\
Soft-KD & \reprosim{78.3}{1.1} & \reprosim{91.6}{0.5} & \reprosim{41.8}{3.2} & \reprosim{37.5}{1.8} & \reprosim{60.5}{1.2} & \reprosim{16.8}{1.3} & \reprosim{43.8}{1.8} & \reprosim{65.2}{0.4} & \reprosim{\best{77.4}}{0.1} & \reprosim{\best{85.7}}{1.4} & \reprosim{74.6}{2.0} & \reprosim{77.2}{0.9} & \reprosim{\best{97.0}}{1.0} & \reprosim{$-$9.6}{1.0}\\
Off-MOPD & \reprosim{82.8}{0.9} & \reprosim{91.2}{0.4} & \reprosim{48.3}{2.5} & \reprosim{39.4}{2.9} & \reprosim{63.4}{0.7} & \reprosim{17.2}{0.5} & \reprosim{45.6}{1.8} & \reprosim{67.0}{0.4} & \reprosim{76.6}{1.3} & \reprosim{81.4}{1.4} & \reprosim{73.1}{1.3} & \reprosim{77.4}{0.6} & \reprosim{94.8}{1.3} & \reprosim{$-$7.2}{0.8}\\
MOPD & \reprosim{\best{82.9}}{0.4} & \reprosim{91.4}{0.1} & \reprosim{\best{53.3}}{2.1} & \reprosim{\best{43.8}}{2.1} & \reprosim{64.1}{0.3} & \reprosim{\best{20.0}}{0.6} & \reprosim{51.5}{1.0} & \reprosim{\best{68.0}}{0.1} & \reprosim{75.8}{1.4} & \reprosim{79.3}{0.7} & \reprosim{\best{79.1}}{2.0} & \reprosim{\best{78.0}}{1.2} & \reprosim{95.9}{1.6} & \reprosim{\best{$-$4.7}}{0.4}\\
\bottomrule
\end{tabularx}
\endgroup
\end{table}

\begin{table}[H]
\papertablestyle
\caption{\textbf{Per-benchmark results for \textsc{Qwen3-1.7B} with \textsc{Qwen3-8B} teachers.}
Notation follows \cref{tab:capability-recovery}.
ID Rec.\ and OOD~$\Delta$ pool ALFWorld's seen (ALF-s) and unseen (ALF-u) splits.
Base repeats \cref{tab:recovery-qwen3-1p7b} and Teachers repeats \cref{tab:recovery-qwen3-8b}, so ID Rec.\ is normalized by the 8B teachers' gains over the 1.7B base and is not directly comparable to \cref{tab:recovery-qwen3-1p7b}. }
\label{tab:recovery-qwen3-1p7b-8bteacher}
\footnotesize
\centering
\begingroup
\setlength{\parskip}{0pt}
\fontsize{6.2}{7.4}\selectfont
\setlength{\tabcolsep}{0.8pt}
\begin{tabularx}{\linewidth}{l*{12}{>{\hsize=0.98\hsize\linewidth=\hsize}R}>{\hsize=1.12\hsize\linewidth=\hsize}R@{\hspace{3.1pt}}>{\hsize=1.12\hsize\linewidth=\hsize}R}
\toprule
 & \multicolumn{8}{c}{\textbf{Single-turn}} & \multicolumn{4}{c}{\textbf{Multi-turn}} & \multicolumn{2}{c}{\textbf{Overall}}\\
\cmidrule(lr){2-9}\cmidrule(lr){10-13}\cmidrule(l){14-15}
\tableheadrow\textbf{\mia{}} & \recoveryID{MATH} & \recoveryID{GSM} & \recoveryID{AIME24} & \recoveryID{AIME25} & \recoveryID{IFEval} & \recoveryID{IFBen} & \recoveryOOD{GPQA} & \recoveryOOD{MMLU} & \recoveryID{IC-SQL} & \recoveryOOD{ALF-s} & \recoveryOOD{ALF-u} & \recoveryOOD{WShop} & \colhead{ID Rec.} & \colhead{OOD $\Delta$}\\
\midrule
Base & \reprosim{73.4}{0.9} & \reprosim{82.7}{0.1} & \reprosim{12.9}{0.4} & \reprosim{11.2}{0.8} & \reprosim{66.7}{0.1} & \reprosim{18.3}{0.3} & \reprosim{30.8}{0.5} & \reprosim{47.0}{0.1} & \reprosim{12.4}{0.1} & \reprosim{9.3}{0.7} & \reprosim{9.0}{0.7} & \reprosim{6.6}{0.2} & \reproref{\na} & \reproref{0.0}\\
Teachers & \reprosim{89.6}{0.7} & \reprosim{93.0}{0.1} & \reprosim{41.2}{1.5} & \reprosim{31.7}{0.8} & \reprosim{88.5}{0.2} & \reprosim{42.0}{0.3} & \reprosim{56.1}{0.5} & \reprosim{69.8}{0.1} & \reprosim{77.6}{0.1} & \reprosim{33.6}{1.2} & \reprosim{32.1}{0.7} & \reprosim{11.8}{0.2} & \reproref{100.0} & \reproref{\na}\\
\midrule[.3pt]
SFT & \reprosim{\best{88.3}}{0.3} & \reprosim{\best{91.2}}{0.3} & \reprosim{\best{38.0}}{1.7} & \reprosim{\best{29.9}}{2.4} & \reprosim{\best{88.2}}{0.4} & \reprosim{\best{40.1}}{0.8} & \reprosim{22.9}{1.5} & \reprosim{30.3}{0.8} & \reprosim{74.0}{0.5} & \reprosim{1.9}{0.8} & \reprosim{2.0}{1.1} & \reprosim{1.7}{0.2} & \reprosim{\best{92.5}}{1.6} & \reprosim{$-$9.2}{0.5}\\
Soft-KD & \reprosim{87.1}{0.3} & \reprosim{90.3}{0.4} & \reprosim{34.5}{0.7} & \reprosim{28.7}{1.7} & \reprosim{87.7}{0.5} & \reprosim{37.3}{1.0} & \reprosim{24.6}{0.8} & \reprosim{30.6}{1.2} & \reprosim{\best{76.8}}{0.1} & \reprosim{2.4}{0.8} & \reprosim{2.7}{0.9} & \reprosim{2.1}{0.5} & \reprosim{88.6}{1.5} & \reprosim{$-$8.4}{0.1}\\
Off-MOPD & \reprosim{86.0}{0.2} & \reprosim{90.3}{0.3} & \reprosim{33.9}{1.3} & \reprosim{27.4}{1.6} & \reprosim{85.4}{0.3} & \reprosim{32.4}{0.7} & \reprosim{\best{29.0}}{1.9} & \reprosim{42.9}{0.4} & \reprosim{72.6}{0.9} & \reprosim{\best{7.1}}{1.2} & \reprosim{\best{8.5}}{1.1} & \reprosim{1.9}{0.6} & \reprosim{80.9}{1.0} & \reprosim{$-$3.0}{0.8}\\
MOPD & \reprosim{85.3}{0.8} & \reprosim{89.0}{0.2} & \reprosim{31.7}{0.6} & \reprosim{24.7}{1.7} & \reprosim{85.5}{0.6} & \reprosim{33.1}{0.5} & \reprosim{28.6}{1.1} & \reprosim{\best{47.2}}{0.1} & \reprosim{72.3}{0.6} & \reprosim{4.3}{0.7} & \reprosim{6.7}{0.7} & \reprosim{\best{3.0}}{0.3} & \reprosim{77.4}{1.9} & \reprosim{\best{$-$2.3}}{0.3}\\
\addlinespace[3pt]
\midrule
\tableheadrow\textbf{\boldmath\agentthree{}} & \recoveryOOD{MATH} & \recoveryOOD{GSM} & \recoveryOOD{AIME24} & \recoveryOOD{AIME25} & \recoveryOOD{IFEval} & \recoveryOOD{IFBen} & \recoveryOOD{GPQA} & \recoveryOOD{MMLU} & \recoveryID{IC-SQL} & \recoveryID{ALF-s} & \recoveryID{ALF-u} & \recoveryID{WShop} & \colhead{ID Rec.} & \colhead{OOD $\Delta$}\\
\midrule
Base & \reprosim{73.4}{0.9} & \reprosim{82.7}{0.1} & \reprosim{12.9}{0.4} & \reprosim{11.2}{0.8} & \reprosim{66.7}{0.1} & \reprosim{18.3}{0.3} & \reprosim{30.8}{0.5} & \reprosim{47.0}{0.1} & \reprosim{12.4}{0.1} & \reprosim{9.3}{0.7} & \reprosim{9.0}{0.7} & \reprosim{6.6}{0.2} & \reproref{\na} & \reproref{0.0}\\
Teachers & \reprosim{84.8}{0.7} & \reprosim{92.7}{0.0} & \reprosim{29.4}{0.9} & \reprosim{22.3}{1.0} & \reprosim{84.5}{0.2} & \reprosim{27.7}{0.3} & \reprosim{49.0}{0.5} & \reprosim{65.3}{0.1} & \reprosim{78.6}{0.1} & \reprosim{82.1}{1.4} & \reprosim{80.6}{0.7} & \reprosim{62.2}{0.2} & \reproref{100.0} & \reproref{\na}\\
\midrule[.3pt]
SFT & \reprosim{60.0}{0.3} & \reprosim{78.0}{0.5} & \reprosim{1.3}{0.3} & \reprosim{1.1}{0.4} & \reprosim{42.5}{2.1} & \reprosim{11.6}{0.7} & \reprosim{9.9}{1.1} & \reprosim{20.2}{2.6} & \reprosim{\best{77.3}}{0.4} & \reprosim{\best{82.6}}{0.8} & \reprosim{\best{79.9}}{1.3} & \reprosim{\best{62.2}}{0.5} & \reprosim{\best{99.3}}{0.5} & \reprosim{$-$14.8}{0.7}\\
Soft-KD & \reprosim{67.0}{1.1} & \reprosim{75.1}{0.7} & \reprosim{0.5}{0.4} & \reprosim{0.8}{0.3} & \reprosim{41.8}{1.8} & \reprosim{17.4}{0.7} & \reprosim{9.8}{2.3} & \reprosim{37.0}{0.9} & \reprosim{70.9}{0.1} & \reprosim{77.4}{2.2} & \reprosim{74.6}{1.3} & \reprosim{61.6}{1.8} & \reprosim{93.0}{1.2} & \reprosim{$-$11.7}{0.7}\\
Off-MOPD & \reprosim{\best{72.6}}{1.0} & \reprosim{82.3}{0.4} & \reprosim{\best{13.5}}{1.0} & \reprosim{12.6}{1.0} & \reprosim{\best{62.8}}{1.4} & \reprosim{\best{17.6}}{0.8} & \reprosim{\best{25.6}}{2.0} & \reprosim{\best{46.9}}{0.1} & \reprosim{63.2}{1.6} & \reprosim{67.6}{0.8} & \reprosim{59.2}{1.1} & \reprosim{54.6}{1.1} & \reprosim{78.9}{0.9} & \reprosim{\best{$-$1.1}}{0.7}\\
MOPD & \reprosim{67.7}{1.4} & \reprosim{\best{82.9}}{0.5} & \reprosim{10.5}{1.0} & \reprosim{\best{12.7}}{1.1} & \reprosim{61.5}{1.5} & \reprosim{11.4}{0.4} & \reprosim{\best{25.6}}{1.3} & \reprosim{44.4}{0.1} & \reprosim{60.8}{0.7} & \reprosim{61.7}{2.1} & \reprosim{55.2}{0.7} & \reprosim{51.0}{0.7} & \reprosim{73.3}{0.3} & \reprosim{$-$3.3}{0.6}\\
\bottomrule
\end{tabularx}
\endgroup
\end{table}

\section{Efficiency and Compute}
\label{app:compute}

\subsection{Training-Step Breakdown}
\label{app:step-timing}
\Cref{tab:opd-step-timing} decomposes training time within each \textsc{Qwen3-8B} configuration. Seconds per step are not directly comparable across methods: Offline-MOPD and MOPD use batches of 24 and 96 prompts on \mia{}, respectively. The breakdown identifies the dominant component: student rollout accounts for 86.5--96.0\% of step time, while teacher scoring accounts for at most 2.6\%.

\begin{table}[H]
\papertablestyle
\caption{\textbf{OPD training-step timing for \textsc{Qwen3-8B}.}
Batch sizes (Offline-MOPD/MOPD) are 24/96 on \mia{} and 40/40 on \agentthree{}; s/step describes each configuration. Rollout includes student generation and live environment interaction where applicable. Teacher scoring comprises forward passes over student trajectories. }
\label{tab:opd-step-timing}
\footnotesize
\centering
\setlength{\tabcolsep}{3pt}
\begin{tabularx}{\linewidth}{@{}Yrrrr@{}}
\toprule
 & \multicolumn{2}{c}{\tableheading{Offline-MOPD}} & \multicolumn{2}{c}{\tableheading{MOPD}}\\
\cmidrule(lr){2-3}\cmidrule(lr){4-5}
\tableheadrow\tableheading{\mia{}} & \tableheading{s/step} & \tableheading{\% of step} & \tableheading{s/step} & \tableheading{\% of step}\\
\midrule
Student rollout & 60.89 & 95.6 & 276.23 & 86.5\\
Teacher scoring & 0.60 & 0.9 & 8.20 & 2.6\\
Student log-probabilities & 0.40 & 0.6 & 6.60 & 2.1\\
Student gradient update & 1.80 & 2.8 & 28.40 & 8.9\\
\midrule
Full step & 63.69 & 100.0 & 319.43 & 100.0\\
\midrule
\tableheadrow\tableheading{\agentthree{}} & \tableheading{s/step} & \tableheading{\% of step} & \tableheading{s/step} & \tableheading{\% of step}\\
\midrule
Student rollout & 640.61 & 96.0 & 720.83 & 92.5\\
Teacher scoring & 4.70 & 0.7 & 10.20 & 1.3\\
Student log-probabilities & 4.20 & 0.6 & 9.30 & 1.2\\
Student gradient update & 17.60 & 2.6 & 39.30 & 5.0\\
\midrule
Full step & 667.11 & 100.0 & 779.63 & 100.0\\
\bottomrule
\end{tabularx}
\end{table}

\subsection{Teacher Trajectory Generation}
\label{app:teacher-generation}
Teacher-corpus construction requires autoregressive decoding and, for agentic tasks, environment interaction (\cref{app:corpus}). SFT, Soft-KD, and Offline-MOPD reuse the corpus across hyperparameter settings and seeds. A generation cost of $C_{\mathrm{gen}}$ shared across $N$ runs contributes $C_{\mathrm{gen}}/N$ GPU-hours per run and is excluded from training costs. The following B200 costs equal eight times the recorded generation wall time in hours.
\begin{center}
\papertablestyle
\footnotesize
\setlength{\tabcolsep}{3pt}
\begin{tabularx}{\linewidth}{@{}Yrr@{}}
\toprule
\tableheadrow\tableheading{Teacher family} & \tableheading{\mia{} (B200 GPU-h)} & \tableheading{\agentthree{} (B200 GPU-h)}\\
\midrule
\textsc{Qwen3-1.7B} & 7.93 & 8.20\\
\textsc{Ministral-3-3B} & 3.41 & 22.39\\
\textsc{Qwen3.5-4B} & 14.49 & 66.37\\
\textsc{Qwen3-8B} & 13.55 & 45.27\\
\bottomrule
\end{tabularx}
\end{center}

MOPD and Offline-MOPD also require teacher forward passes to score student-generated trajectories during training. This scoring cost is included in training GPU-hours. Because corpus generation is excluded, training-cost ratios do not represent end-to-end cost ratios.

\subsection{Training GPU-Hours}
Each run uses eight B200 GPUs. Training GPU-hours exclude teacher-corpus generation, validation, test evaluation, and other hyperparameter-search runs. \Cref{tab:compute} reports cumulative costs at the full budget and the selected validation checkpoint. MOPD has the highest full-run cost in both setups: 383.32 GPU-hours on \mia{} and 415.80 on \agentthree{}, or $14.8\times$ and $23.1\times$ SFT's costs, respectively.
\begin{table}[H]
\centering
\papertablestyle

\footnotesize
\setlength{\tabcolsep}{3pt}
\caption{\textbf{Training configurations and checkpoint costs for \textsc{Qwen3-8B}.}
Budget is the number of updates in a completed run. OPD cost at update $k$ is $C(k)=8kT/3600$ GPU-hours, with constant step time $T$ from \cref{tab:opd-step-timing}. }\label{tab:compute}
\begin{tabularx}{\linewidth}{@{}Yrrrrrrr@{}}
\toprule
 & & & \multicolumn{2}{c}{\tableheading{Full run}} & \multicolumn{3}{c}{\tableheading{Best validation checkpoint}}\\
\cmidrule(lr){4-5}\cmidrule(lr){6-8}
\tableheadrow\tableheading{Method} & \tableheading{LR ($10^{-6}$)} & \tableheading{Budget} & \tableheading{Step} & \tableheading{GPU-h} & \tableheading{Step} & \tableheading{Dev (\%)} & \tableheading{GPU-h}\\

\midrule
\tableheadrow\multicolumn{8}{l}{\tableheading{\mia{}}}\\
SFT & 5 & 1920 & 1920 & 25.90 & 1056 & 65.15 & 14.57\\
Soft-KD & 6.3 & 1920 & 1920 & 51.90 & 1440 & 65.31 & 38.60\\
Off-MOPD & 1 & 1140 & 1140 & 161.35 & 1026 & 65.07 & 145.21\\
MOPD & 3 & 540 & 540 & 383.32 & 390 & 65.21 & 276.84\\
\midrule
\tableheadrow\multicolumn{8}{l}{\tableheading{\agentthree{}}}\\
\midrule
SFT & 5 & 1140 & 1140 & 18.00 & 285 & 80.26 & 4.50\\
Soft-KD & 5 & 1140 & 1140 & 25.60 & 912 & 81.23 & 20.44\\
Off-MOPD & 3 & 240 & 240 & 355.79 & 240 & 80.20 & 355.79\\
MOPD & 3 & 240 & 240 & 415.80 & 156 & 80.67 & 270.27\\
\bottomrule
\end{tabularx}
\end{table}

\section{Limitations}
\label{app:limitations}
Due to limited resources, our main study covers four models from 1.7B to 8B parameters and two multi-teacher settings. More diverse teacher setup beyond \mia{} and \agentthree{ } are not tested. The larger-teacher comparison uses one 1.7B student and 8B teachers from the same model family. These results do not establish whether off-policy distillation remains competitive at larger scales or with less compatible teachers.


\FloatBarrier

\appletrademark

\end{document}